\theoremstyle{plain}
\newtheorem{theorem}{Theorem}[section]
\newtheorem{lemma}[theorem]{Lemma}
\newtheorem{corollary}[theorem]{Corollary}
\theoremstyle{definition}
\theoremstyle{remark}
\newtheorem{remark}[theorem]{Remark}
\icmltitlerunning{Double Variance Reduction: A Smoothing Trick for Composite Optimization Problems without First-Order Gradient}
\def\DM{{\mathcal D}}
\def\NM{{\mathcal N}}
\def\OM{{\mathcal O}}
\def\XM{{\mathcal X}}
\def\RB{{\mathbb R}}
\def\EB{{\mathbb E}}
\def\dif{\mathop{}\hphantom{\mskip-\thinmuskip}\mathrm{d}}%
\let\daccent\d
\let\d\relax
\newcommand\d{\ifmmode\dif\else\expandafter\daccent\fi}
\newcommand{\dotprod}[1]{\left\langle #1\right\rangle}
\newcommand{\norm}[1]{\Vert #1\Vert^2}
\newcommand{\bnorm}[1]{\Big\Vert #1\Big\Vert^2}
\newcommand{\td}[1]{\tilde{#1}}
\newcommand{\hn}{\hat{\nabla}}
\newcommand{\dom}{\RB^d}
\DeclareMathOperator*{\argmin}{arg\,min}
\DeclareMathOperator{\prox}{\mathrm{prox}_{\eta\psi}}
\DeclareMathOperator{\tr}{tr}
\newlength{\leftstackrelawd}
\newlength{\leftstackrelbwd}
\def\leftstackrel#1#2{\settowidth{\leftstackrelawd}%
{${{}^{#1}}$}\settowidth{\leftstackrelbwd}{$#2$}%
\addtolength{\leftstackrelawd}{-\leftstackrelbwd}%
\leavevmode\ifthenelse{\lengthtest{\leftstackrelawd>0pt}}%
{\kern-.5\leftstackrelawd}{}\mathrel{\mathop{#2}\limits^{#1}}}
\begin{document}

\twocolumn[
\icmltitle{Double Variance Reduction: A Smoothing Trick for Composite Optimization Problems without First-Order Gradient}



\icmlsetsymbol{equal}{*}

\begin{icmlauthorlist}
\icmlauthor{Hao Di}{XJTU,work}
\icmlauthor{Haishan Ye}{XJTU,SGIT}
\icmlauthor{Yueling Zhang}{bfsu}
\icmlauthor{Xiangyu Chang}{XJTU}
\icmlauthor{Guang Dai}{SGIT}
\icmlauthor{Ivor W. Tsang}{Astar,NTU}
\end{icmlauthorlist}

\icmlaffiliation{XJTU}{Center for Intelligent Decision-Making and Machine Learning, School of Management, Xi’an Jiaotong University, China.}
\icmlaffiliation{work}{This work was completed during the internship at SGIT AI Lab, State Grid Corporation of China.}
\icmlaffiliation{SGIT}{SGIT AI Lab, State Grid Corporation of China.}
\icmlaffiliation{Astar}{CFAR and IHPC, Agency for Science, Technology and Research (A*STAR), Singapore}
\icmlaffiliation{NTU}{College of Computing and Data Science, NTU, Singapore.}
\icmlaffiliation{bfsu}{International Business School, Beijing Foreign Studies University, Beijing, China.}

\icmlcorrespondingauthor{Haishan Ye}{yehaishan@xjtu.edu.cn}

\icmlkeywords{Zeroth-Order Optimization, Variance Reduction}

\vskip 0.3in
]



\printAffiliationsAndNotice{}  

\begin{abstract}
Variance reduction techniques are designed to decrease the sampling variance, thereby accelerating convergence rates of first-order (FO) and zeroth-order (ZO) optimization methods.
However, in composite optimization problems, ZO methods encounter an additional variance called the coordinate-wise variance, which stems from the random gradient estimation.
To reduce this variance, prior works require estimating all partial derivatives, essentially approximating FO information.
This approach demands $\OM(d)$ function evaluations ($d$ is the dimension size), which incurs substantial computational costs and is prohibitive in high-dimensional scenarios. 
This paper proposes the Zeroth-order Proximal Double Variance Reduction (\texttt{ZPDVR}) method, which utilizes the averaging trick to reduce both sampling and coordinate-wise variances.
Compared to prior methods, \texttt{ZPDVR} relies solely on random gradient estimates, calls the stochastic zeroth-order oracle (SZO) in expectation $\OM(1)$ times per iteration, and achieves the optimal $\OM(d(n + \kappa)\log (\frac{1}{\epsilon}))$ SZO query complexity in the strongly convex and smooth setting, where $\kappa$ represents the condition number and $\epsilon$ is the desired accuracy.
Empirical results validate \texttt{ZPDVR}'s linear convergence and demonstrate its superior performance over other related methods.
\end{abstract}

\section{Introduction}
\label{sec:introduction}

This paper considers the following composite finite-sum optimization problem,
\begin{equation}
    \begin{aligned}
    \label{eq:objective_function}
    \min_{x\in \dom} \quad & F(x) = f(x) + \psi(x), \\
    \text{with} \quad & f(x) = \frac{1}{n}\sum_{i=1}^n f_i(x),
    \end{aligned}
\end{equation}
where $f_i(x): \RB^d\to \RB$ is a $\mu$-strongly convex and $L$-smooth function
, and $\psi(x):\RB^d \to \RB$ is a convex and non-smooth function, such as the $\ell_1$ regularization.
This formulation is prevalent in several critical applications, such as signal compression \citep{jenatton2011proximal}, image recovery \citep{chen2021deep}, sparse model training \citep{beck2009fast, yun2021adaptive}.
In this study, we assume that only the zeroth-order (ZO) oracle (i.e., the value of function $f(x)$) is available.
Under this condition, we explore the ZO variance reduction method to achieve the linear convergence rate for the above composite optimization problem \eqref{eq:objective_function} with only random gradient estimates.

\begin{table*}[t]
\caption{Summary of SZO query complexity and expected SZO queries per iteration of different methods. FO and ZO represent first-order and zeroth-order methods, respectively. PGD is the proximal gradient descent method.}
\label{tab:complexity result}
\vskip 0.15in
\begin{center}
\begin{small}
\begin{tabular}{cllc}
\toprule
                              & Methods  & SZO query complexity & Expected SZO queries per iteration \\ \midrule
                              
\multirow{2}{*}{FO }           & PGD     &        $\mathcal{O}(dn\kappa\log(\frac{1}{\epsilon}))$        & $\mathcal{O}(nd)$ \\        
                             & PSVRG \citep{xiao2014proximal}   &       $\mathcal{O}(d(n+\kappa)\log(\frac{1}{\epsilon}))$       & $\mathcal{O}(d)$ \\ \midrule
\multirow{2}{*}{ZO }         & SEGA \citep{hanzely2018sega}  &        $\mathcal{O}(dn\kappa\log(\frac{1}{\epsilon}))$       & $\mathcal{O}(n)$ \\
                              & ZPDVR (ours)    &        $\OM(d(n+\kappa)\log(\frac{1}{\epsilon}))$        & $\mathcal{O}(1)$  \\ \bottomrule
\end{tabular}
\end{small}
\end{center}
\vskip -0.1in
\end{table*}

ZO optimization methods approximate the directional derivative information by utilizing either the function value or the difference in function values, and employ this estimate to minimize the objective function \citep{liu2020primer}.
Due to their exclusive reliance on function evaluations, ZO optimization methods simplify algorithm implementation \citep{nesterov2017random, liu2020primer} and are increasingly employed in scenarios where gradient information is either inaccessible or impractical, such as in neural network black-box attacks and robotic stiffness control \citep{tu2019autozoom, li2023stochastic}. 
Recently, \citet{malladi2023finetuning} have capitalized on the benefits of ZO optimization methods for fine-tuning large language models, achieving a comparable performance to first-order (FO) methods while significantly reducing memory usage. 
Furthermore, \citet{chen2023deepzero} employ ZO methods to train the ResNet-20 model from scratch, illustrating the practicality of these methods in large-scale machine learning applications.

To decrease the variance resulting from the random sampling and accelerate the convergence rate of ZO optimization methods for finite-sum optimization problems (i.e., $\min_{x\in \RB^d} \sum_{i=1}^n f_i(x)$), researchers have introduced various variance reduction techniques tailored for diverse settings, encompassing non-convex and non-convex non-smooth problems \citep{fang2018spider, ji2019improved,chen23ai, kazemi2023efficient}, spanning from centralized to decentralized scenarios \citep{yi2022zeroth, lin2023decentralized}.
In addition to the sampling variance, ZO gradient estimates also suffer from high inherent gradient approximation variance, which is $\OM(d)$ times the value of gradient norm $\norm{\nabla f(x)}$ \citep{nesterov2017random, liu2018zeroth}.
This variance stems from the random directional gradient estimation in ZO methods and can be eliminated by estimating each partial derivative along the coordinate directions instead of random directions.
Consequently, we refer to this variance as the \textit{coordinate-wise variance} throughout this paper.
For the unconstrained finite-sum optimization problem, the FO optimality condition indicates that the coordinate-wise variance will automatically diminish as $x\to x^*$.

However, in the context of the composite optimization problem \eqref{eq:objective_function} as well as the constrained optimization problem, a crucial characteristic is that $\norm{\nabla f(x^*)}$ is generally not equal to $0$ at the optimal point $x^*$.
Therefore, the coordinate-wise variance persists even after reaching the optimal point $x^*$.
Consequently, this non-vanishing variance induced by the random gradient estimate inhibits the convergence to the optimal point $x^*$, which leads to inferior performance.
Prior ZO variance reduction methods for the composite optimization problem \citep{liu2018zeroth, kazemi2023efficient} usually adopt stochastic variance reduction gradient (SVRG) techniques \citep{johnson2013accelerating, nguyen2017sarah, fang2018spider} to eliminate the inherent sampling variance and approximate the gradient by computing all partial gradient estimations to circumvent the coordinate-wise variance.
Deriving all partial gradient estimations requires $\Omega(d)$ function evaluations to make a step, which is prohibitive for high-dimensional problems and hinders the application of ZO variance reduction methods.

In this work, we employ the averaging trick and introduce a novel gradient estimator for the gradient $\nabla f(x)$, utilizing only random gradient estimates.
This proposed estimator calculates the difference between the random gradient estimate and the vector value of the current estimator along the same direction and uses the average scheme to refine estimation errors.
We demonstrate that this gradient estimator can progressively reduce the coordinate-wise variance.
Furthermore, by combining the proposed estimator with SVRG techniques, we present the \textbf{Z}eroth-order \textbf{P}roximal \textbf{D}ouble \textbf{V}ariance \textbf{R}eudction method, \texttt{ZPDVR}, which can reduce the inherent variances — both the random sampling variance and the coordinate-wise variance, in the ZO composite optimization problem, using only the random gradient estimate.
We also conduct a comprehensive analysis of \texttt{ZPDVR} and prove that the stochastic zeroth-order oracle (SZO) query complexity (the number of function estimations) of \texttt{ZPDVR} is $\OM(d(n+\kappa)\log(\frac{1}{\epsilon}))$ and it only calls $\OM(1)$ SZO in expectation per iteration.
As shown in \cref{tab:complexity result}, compared to other related FO and ZO methods, \texttt{ZPDVR} obtains the lowest SZO query complexity and exhibits the fewest SZO calls in expectation per iteration, which underscores its computational efficiency. 
The main contributions of this work are summarized as follows:
\begin{itemize}
    \item We propose a novel ZO variance reduction method for the composite optimization problem \eqref{eq:objective_function} using only random gradient estimates, \texttt{ZPDVR}, which reduce the inherent variances induced by random sampling and random gradient estimation. 
    \texttt{ZPDVR} calls the SZO $\OM(1)$ times in expectation per iteration, which demonstrates efficiency in updating parameters per iteration and exhibits applicability in high-dimensional scenarios.
    \item We conduct a convergence analysis of \texttt{ZPDVR}, and derive the $\mathcal{O}(d(n+\kappa)\log\frac{1}{\epsilon})$ SZO query complexity for \texttt{ZPDVR}, which aligns with the best-know SZO query complexity of SVRG methods and requires fewer SZO function evaluations than other related FO and ZO methods. The optimal convergence property further demonstrates the computational efficiency of \texttt{ZPDVR}.
\end{itemize}
\textbf{Notation.} Let $x^*$ denote the optimal point of the function $F$, specifically $x^*=\argmin_{x\in \RB^d} F(x)$, and $[n]$ represent the set $\{1, \dots, n\}$. 
For the $\mu$-strongly convex and $L$-smooth function $f_i$, $i\in[n]$, there exist two constants $\mu, L>0$ and $\mu\leq L$ such that 
\begin{equation*}
    \begin{aligned}
    f_i(x_1) &\geq f_i(x_2) + \dotprod{\nabla f(x_2), x_1 - x_2} + \frac{\mu}{2}\norm{x_1 - x_2},\\
    f_i(x_1) &\leq f_i(x_2) + \dotprod{\nabla f(x_2), x_1 - x_2} + \frac{L}{2}\norm{x_1 - x_2},   
    \end{aligned}
\end{equation*}
for $\forall x_1, x_2 \in \RB^d$. For the convex function $\psi$, the following inequality holds for $\forall x_1, x_2 \in \RB^d$
\begin{align*}
\psi(x_1) \geq \psi(x_2) + \dotprod{\partial \psi(x_2), x_1 - x_2},
\end{align*}
where $\partial \psi(x)$ is the subgradient of $\psi$ at the point $x$.
The proximal operator, denoted as $\prox$, is defined as $\prox(x) = \argmin_{x'\in \dom} \psi(x) + \frac{\eta}{2}\norm{x-x'}$. 
$\kappa = \frac{L}{\mu}$ is the condition number of the function $f$, and $\eta$ denotes the constant learning rate.
If $f(x) = \OM(g(x))$, it implies there is a positive constant $M>0$ such that $\vert f(x)\vert \leq M g(x)$. 
$f(x) = \Omega(g(x))$ indicates that there is a positive constant $C$ such that $C \vert g(x)\vert \leq \vert f(x)\vert $.

\section{Related Work}
\label{sec:related_work}

\textbf{Variance Reduction.} 
Variance reduction methods leverage the control variate technique \citep{rubinstein1985efficiency} to reduce inherent sampling variance in stochastic gradient descent (SGD).
Specifically, for the strongly convex and smooth finite-sum optimization problem, FO variance reduction methods attain the best-know $\mathcal{O}((\kappa + n)\log(\frac{1}{\epsilon}))$ gradient evaluations \citep{gower2020variance} , resulting in a significant acceleration of $\OM(n\kappa\log(\frac{1}{\epsilon}))$ in SGD.
The classic SVRG method \citep{johnson2013accelerating} adopts a double-loop structure, maintaining a snapshot of model parameters to compute the full gradient in the outer loop and constructing an unbiased gradient estimate in the inner loop.
\citet{allen2016variance} and \citet{reddi2016stochastic} extend SVRG into the non-convex setting and derive an improved query complexity than gradient descent (GD) by a factor of $\OM(n^{\frac{1}{3}})$.
Subsequent work \citep{fang2018spider} leverages the stochastic path-integrated differential estimator, proposing SPIDER with $\OM(n^{\frac{1}{2}}\epsilon)$ query complexity for non-convex finite-sum optimization problems.
Diverging from the double-loop structure, \citet{kovalev2020don} and \citet{li2021page} propose loop-less SVRG methods that execute the coin-flip process independently of prior knowledge about the condition number. 
In cases where only a gradient sketch is available, \citet{hanzely2018sega} and \citet{gower2021stochastic} develop gradient estimators via a sketch-and-project process, progressively diminishing the estimation variance.

\textbf{Zeroth-order Optimization.} ZO optimization methods leverage the function evaluation to approximate the gradient and utilize this estimate to minimize the objective function. 
\citet{nesterov2017random} introduce the Gaussian smooth technique and achieve the $\OM(dn\kappa\log (\frac{1}{\epsilon}))$ SZO query complexity for ZO-GD in the strongly convex and smooth scenario.
The convergence properties of ZO-SGD for non-convex problems are analyzed by \citet{ghadimi2013stochastic}.
\citet{ye2018hessian} integrate Hessian information into gradient estimation and propose a Hessian-aware ZO optimization method.
\citet{berahas2022theoretical} derive bounds on the number of samples and the sampling radius to ensure the convergence of ZO-GD with different ZO gradient estimators.
Compared to the FO methods, the query complexity of ZO methods exhibits a dependence on the dimension size $d$ \citep{liu2020primer}.

\textbf{Zeroth-order Variance Reduction.} Combining with SVRG, \citet{liu2018zeroth} introduce the ZO-SVRG method for non-convex finite-sum problems.
Subsequently, \citet{ji2019improved} improve the SZO query complexity of ZO-SVRG, surpassing both ZO-GD and ZO-SGD in the non-convex finite-sum setting and aligning with the results of FO SVRG methods.
For the composite optimization problem \eqref{eq:objective_function}, \citet{huang2019faster} present the convergence property of ZO proximal SVRG (ZPSVRG) method for non-convex non-smooth problems, and demonstrate that only utilizing the random gradient estimate incurs additional bias and the FO gradient information is necessary.
\citet{kazemi2023efficient} improve the iterative complexity of ZPSVRG for the non-convex non-smooth problem.

\section{Methodology}
\label{sec:methodology}
This section presents details of \texttt{ZPDVR}.
We first introduce the random stochastic gradient estimator in ZO methods and demonstrate the undiminished variance when employing it in composite optimization problems. 
Then, we delineate how the proposed method \texttt{ZPDVR} can reduce the two types of variances.

\subsection{Gradient Estimate in Zeroth-Order Optimization}
When only function evaluations are available, here, we utilize the Gaussian smoothing technique \citep{nesterov2017random} to derive the decent direction. 
Specifically, for the smoothing constant $v$ and the random vector $u\sim \NM(0, I_d)$, the directional derivative of $f_i$ in the direction $u$ for the smooth function $f_i$, $i\in [n]$, can be estimated as:
\begin{align}
    \hat{\nabla} f_i(x, u, v) = \frac{f_i(x + vu) - f_i(x )}{v} u.
    \label{eq:stochastic_directional_derivative}
\end{align}
Similarly, we define
\begin{align}
    \hat{\nabla} f(x, u, v) = \frac{1}{n}\sum_{i=1}^n\frac{f_i(x + v u) - f_i(x)}{v} u,
    \label{eq:directional_derivative}
\end{align}
as the approximation of the full directional gradient. 

Since the smoothing constant $v$ is fixed, for simplicity, we leave out $v$ in these gradient estimations and set 
\begin{align*}
    \hat{\nabla} f_i(x, u) := \hat{\nabla} f_i(x, u, v),
\end{align*}
and
\begin{align*}
    \hat{\nabla} f(x, u) := \hat{\nabla} f(x, u, v),
\end{align*}
in the rest of the paper. Under the smoothness condition, the following lemma and corollary show that these estimators defined in Eq.\eqref{eq:stochastic_directional_derivative} and Eq.\eqref{eq:directional_derivative} are nearly unbiased in expectation. The proof is deferred to Appendix \ref{sec:missing proof}.
\begin{lemma}
\label{lemma:nearly_approximation}
    Let the random vector $u$ drawn from the multivariate Gaussian distribution $\NM(0, I_d)$.
    For the $L$-smooth function $f_i$ and any $x\in \RB^d$, $i\in [n]$, the estimator in Eq.\eqref{eq:stochastic_directional_derivative} satisfies: 
    \begin{align}
        \hat{\nabla} f_i(x, u) = uu^\top \nabla f_i(x) + \frac{Lv}{2} s_i(x, u)\norm{u}u,\label{eq:near_fiux}
    \end{align}
    and its expectation w.r.t. $u$ is 
    \begin{align}
        \EB_u[\hat{\nabla} f_i(x, u)] = \nabla f_i(x) + \frac{Lv}{2}\tau_i(x, u), \label{eq:expect_fiux}
    \end{align}
    where $s_i(x, u)$ is a function of $x$ and $u$ within the range of $[0, 1]$, and $\tau_i(x, u)$ is the error term with $\Vert \tau_i \Vert  \leq (d+3)^{\frac{3}{2}}$.
    
    Furthermore, the expected norm of the estimation error between $\hat{\nabla} f_i(x, u)$ and $\nabla f(x)$ is bounded as 
    \begin{equation}
    \begin{aligned}
        &\quad\; \EB_{u}[\norm{\hat{\nabla} f_i(x, u) - \nabla f(x)}] \\
        &\leq 3(d+1)\norm{\nabla f(x)} + \frac{3L^2v^2}{4}(d+6)^3 + \\
        &\qquad 3(d+2) \norm{\nabla f_i(x) - \nabla f(x)}.
    \end{aligned}
    \label{eq:bound variance i}
    \end{equation}
\end{lemma}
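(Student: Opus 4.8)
The plan is to follow the Gaussian-smoothing calculus of \citet{nesterov2017random}, establishing the three claims in order, each reducing to elementary facts about the moments of $u\sim\NM(0,I_d)$. For Eq.\eqref{eq:near_fiux}: since $f_i$ is $L$-smooth and, being $\mu$-strongly convex, also convex, the first-order Taylor remainder is squeezed, i.e.\ for every $x\in\dom$ and direction $u$,
\[
0 \;\le\; f_i(x+vu) - f_i(x) - v\dotprod{\nabla f_i(x),u} \;\le\; \frac{Lv^2}{2}\Vert u\Vert^2 ,
\]
the left inequality being convexity and the right one the quadratic upper bound from smoothness. Dividing the middle quantity by $\frac{Lv^2}{2}\Vert u\Vert^2$ \emph{defines} the scalar $s_i(x,u)\in[0,1]$, so that $f_i(x+vu) - f_i(x) = v\dotprod{\nabla f_i(x),u} + \frac{Lv^2}{2}s_i(x,u)\Vert u\Vert^2$; multiplying by $u/v$ and using the identity $\dotprod{\nabla f_i(x),u}\,u = uu^\top\nabla f_i(x)$ gives Eq.\eqref{eq:near_fiux}.

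For Eq.\eqref{eq:expect_fiux} I would take $\EB_u[\cdot]$ of Eq.\eqref{eq:near_fiux}. Since $\EB_u[uu^\top]=I_d$, the first term contributes exactly $\nabla f_i(x)$, leaving the bias $\frac{Lv}{2}\tau_i$ with $\tau_i := \EB_u[s_i(x,u)\Vert u\Vert^2 u]$. Using $|s_i(x,u)|\le1$ together with $\Vert\EB_u[X]\Vert\le\EB_u[\Vert X\Vert]$ gives $\Vert\tau_i\Vert \le \EB_u[\Vert u\Vert^3]$, and the standard Gaussian moment estimate $\EB_u[\Vert u\Vert^p]\le(d+p)^{p/2}$ for $p\ge2$ (Lemma~1 of \citet{nesterov2017random}), applied with $p=3$, yields $\Vert\tau_i\Vert\le(d+3)^{3/2}$.

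For Eq.\eqref{eq:bound variance i} I would again start from Eq.\eqref{eq:near_fiux} and decompose
\begin{align*}
\hat{\nabla}f_i(x,u) - \nabla f(x) &= uu^\top\big(\nabla f_i(x) - \nabla f(x)\big) \\
&\quad + \big(uu^\top - I_d\big)\nabla f(x) + \tfrac{Lv}{2}s_i(x,u)\Vert u\Vert^2 u ,
\end{align*}
then apply $\Vert a+b+c\Vert^2\le 3\big(\Vert a\Vert^2+\Vert b\Vert^2+\Vert c\Vert^2\big)$ and take $\EB_u[\cdot]$ termwise. The two computations needed are the exact identities $\EB_u[\Vert u\Vert^2\dotprod{u,g}^2] = (d+2)\norm{g}$ and $\EB_u[\norm{(uu^\top-I_d)g}] = (d+1)\norm{g}$, both obtained by rotational invariance of the Gaussian (reduce to $g=\Vert g\Vert e_1$ and use $\EB[u_1^2]=1$, $\EB[u_1^4]=3$); these convert the first two pieces into $3(d+2)\norm{\nabla f_i(x)-\nabla f(x)}$ and $3(d+1)\norm{\nabla f(x)}$. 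The third piece has squared norm $\tfrac{L^2v^2}{4}s_i(x,u)^2\Vert u\Vert^6\le\tfrac{L^2v^2}{4}\Vert u\Vert^6$, and $\EB_u[\Vert u\Vert^6]\le(d+6)^3$ by the same moment estimate with $p=6$, so its contribution is at most $\tfrac{3L^2v^2}{4}(d+6)^3$. Summing the three bounds reproduces the claimed inequality.

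The routine work sits entirely in these Gaussian-moment evaluations, which is also the only place where care is needed: one must carry the constants exactly so that $p=3,6$ give precisely $(d+3)^{3/2}$ and $(d+6)^3$, and evaluate $\EB_u[\Vert u\Vert^2\dotprod{u,g}^2]$ and $\EB_u[\norm{(uu^\top-I_d)g}]$ exactly (not via crude bounds) so that the coefficients land on $3(d+2)$ and $3(d+1)$ rather than looser values. Everything else is direct substitution and the elementary inequality $\Vert a+b+c\Vert^2\le3(\Vert a\Vert^2+\Vert b\Vert^2+\Vert c\Vert^2)$.
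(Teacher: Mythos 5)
Your proposal is correct and follows essentially the same route as the paper's proof: the same Taylor-remainder definition of $s_i(x,u)\in[0,1]$ (you use the squeezed first-order remainder via convexity plus smoothness, the paper uses the mean-value Hessian form with $0\preceq\nabla^2 f_i\preceq LI$ — equivalent here), the same three-term decomposition with $\Vert a+b+c\Vert^2\le 3(\Vert a\Vert^2+\Vert b\Vert^2+\Vert c\Vert^2)$, and the same Gaussian moment identities $\EB[\norm{uu^\top g}]=(d+2)\norm{g}$, $\EB[\norm{(uu^\top-I)g}]=(d+1)\norm{g}$, and $\EB[\Vert u\Vert^{p}]\le(d+p)^{p/2}$. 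The only cosmetic difference is that you evaluate the quadratic Gaussian expectations by rotational invariance rather than via the paper's trace identity (\cref{lemma:proj_gaussian_lemma} and \cref{coro:proj_gaussian_lemma2}), which yields the same constants.
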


In Eq.\eqref{eq:near_fiux}, $s_i(x, u)$ measures the curvature scaled by $L$ along the specified direction $u$ at the given point $x$. 
By taking the maximum value of $s_i(x, u)$ (i.e., $s_i(x, u) := 1$), we derive the upper bound of the distance between estimator $\hat{\nabla} f_i(x, u)$ and the full gradient $\nabla f(x)$ in Eq.\eqref{eq:bound variance i}.
This bound comprises three components: the norm of the true gradient $\nabla f(x)$; the trivial perturbation; and the last error $\norm{\nabla f_i(x) - \nabla f(x)}$ induced by the random sampling of $i$. 
Similarly, the following corollary establishes the nearly unbiased and bounded variance properties of $\hat{\nabla} f(x, u)$.

\begin{corollary}
\label{coro:near_fux}
    Let the random vector $u$ drawn from the multivariate Gaussian distribution $\NM(0, I_d)$.
    For the $L$-smooth function $f$ and any $x\in \RB^d$, $i\in [n]$, the full gradient estimator $\hat{\nabla} f(x, u)$ in Eq.\eqref{eq:directional_derivative} satisfies:
    \begin{align}
        \hat{\nabla} f(x, u) = uu^\top \nabla f(x) + \frac{Lv}{2} s(x, u)\norm{u}u\label{eq:near_fux},
    \end{align}
    with the expectation
    \begin{align}
        \EB_u[\hat{\nabla} f(x, u)] = \nabla f(x) + \frac{Lv}{2}\tau(x, u),\label{eq:expect_fux}
    \end{align}
    where $s(x, u) = \frac{1}{n}\sum_{i=1}^n s_i(x, u)$ and $\Vert \tau\Vert \leq (d+3)^{\frac{3}{2}}$.
    
    Moreover, the expected norm of the estimation error between $\hat{\nabla} f(x, u)$ and $\nabla f(x)$ is bounded as 
    \begin{equation}\label{eq:full directional variance bound}
    \begin{aligned}
            &\quad\; \EB_{u}[\norm{\hat{\nabla} f(x, u) - \nabla f(x)}] \\
            &\qquad \leq \frac{L^2v^2}{2}(d+6)^3 + 2(d+1) \norm{\nabla f(x)}.
        \end{aligned}
    \end{equation}
\end{corollary}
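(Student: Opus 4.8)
The plan is to derive \cref{coro:near_fux} from \cref{lemma:nearly_approximation} by averaging over the sample index, using that $\hat{\nabla} f(x,u)=\frac1n\sum_{i=1}^n\hat{\nabla} f_i(x,u)$ straight from the definition \eqref{eq:directional_derivative}. For \eqref{eq:near_fux} I would sum identity \eqref{eq:near_fiux} over $i\in[n]$ and divide by $n$: since $\nabla f(x)=\frac1n\sum_i\nabla f_i(x)$, the linear term becomes $uu^\top\nabla f(x)$, and with $s(x,u):=\frac1n\sum_i s_i(x,u)$ — an average of quantities in $[0,1]$, hence itself in $[0,1]$ — the curvature term becomes $\frac{Lv}{2}s(x,u)\Vert u\Vert u$. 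Taking $\EB_u[\cdot]$ and averaging \eqref{eq:expect_fiux} over $i$ gives \eqref{eq:expect_fux} with $\tau(x,u):=\frac1n\sum_i\tau_i(x,u)$, and the triangle inequality with $\Vert\tau_i\Vert\le(d+3)^{3/2}$ yields $\Vert\tau\Vert\le(d+3)^{3/2}$.

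For the variance bound \eqref{eq:full directional variance bound}, I would start from \eqref{eq:near_fux} and write
\begin{align*}
\hat{\nabla} f(x,u)-\nabla f(x) = (uu^\top-I_d)\nabla f(x) + \tfrac{Lv}{2}\,s(x,u)\Vert u\Vert u .
\end{align*}
Applying $\norm{a+b}\le 2\norm{a}+2\norm{b}$ and then $\EB_u[\cdot]$ splits the bound into $2\,\EB_u\!\big[\norm{(uu^\top-I_d)\nabla f(x)}\big]$ plus $\tfrac{L^2v^2}{2}\,\EB_u\!\big[s(x,u)^2\Vert u\Vert^4\big]$. Since $s(x,u)\in[0,1]$, the second term is at most $\tfrac{L^2v^2}{2}\EB_u[\Vert u\Vert^4]=\tfrac{L^2v^2}{2}d(d+2)\le\tfrac{L^2v^2}{2}(d+6)^3$, which is exactly the first summand in \eqref{eq:full directional variance bound}.

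The only non-mechanical step is evaluating $\EB_u[\norm{(uu^\top-I_d)g}]$ with $g=\nabla f(x)$. Expanding the square gives $\EB_u[(u^\top g)^2\Vert u\Vert^2]-2\EB_u[(u^\top g)^2]+\norm{g}$; by rotational invariance of $\NM(0,I_d)$ I may assume $g=\Vert g\Vert e_1$, whence $\EB_u[(u^\top g)^2]=\norm{g}$ and $\EB_u[(u^\top g)^2\Vert u\Vert^2]=\norm{g}\big(\EB[u_1^4]+(d-1)\EB[u_1^2u_2^2]\big)=(d+2)\norm{g}$, so $\EB_u[\norm{(uu^\top-I_d)g}]=(d+1)\norm{g}$. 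Doubling this gives the second summand of \eqref{eq:full directional variance bound}, finishing the proof. (Equivalently, one could rerun the proof of \eqref{eq:bound variance i} from \cref{lemma:nearly_approximation} with the $L$-smooth function $f$ itself in place of $f_i$ and with no sampling index, which removes the $\norm{\nabla f_i(x)-\nabla f(x)}$ contribution and replaces the factor $3(d+1)$ by $2(d+1)$.) I expect the mixed fourth-moment identity $\EB_u[(u^\top g)^2\Vert u\Vert^2]=(d+2)\norm{g}$ to be the only point genuinely requiring care.
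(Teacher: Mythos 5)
Your proposal is correct and follows essentially the same route as the paper: average the identity of \cref{lemma:nearly_approximation} over $i$ to obtain \eqref{eq:near_fux}--\eqref{eq:expect_fux}, then split the squared error into $2\,\EB_u\big[\Vert(uu^\top-I_d)\nabla f(x)\Vert^2\big]$ plus the perturbation term, using $\EB_u\big[\Vert(uu^\top-I_d)g\Vert^2\big]=(d+1)\Vert g\Vert^2$ (the paper derives this from \cref{coro:proj_gaussian_lemma2}; your rotational-invariance computation of the fourth moments is equivalent). The one slip is in the perturbation term: it equals $\frac{Lv}{2}s(x,u)\Vert u\Vert^2 u$, since it arises from $\frac{v}{2}\,u^\top\nabla^2 f(x')u\cdot u$, so its squared norm scales as $\Vert u\Vert^6$ rather than $\Vert u\Vert^4$, and the summand $\frac{L^2v^2}{2}(d+6)^3$ comes from $\EB[\Vert u\Vert^6]\le(d+6)^3$ (\cref{lemma:bound_random_norm}) rather than from $\EB[\Vert u\Vert^4]=d(d+2)$. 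Your final constant is unaffected because you over-bounded $d(d+2)$ by $(d+6)^3$ in any case, so the argument stands once the exponent is corrected.
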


From Eq.\eqref{eq:expect_fiux} and Eq.\eqref{eq:expect_fux}, when the smoothing constant $v$ is sufficiently small, the following approximations hold: $\EB[\hn f_i(x, u)] \approx \nabla f_i(x)$ and $\EB[\hn f(x,u)] \approx \nabla f(x)$.
Besides, Eq.\eqref{eq:bound variance i} and Eq.\eqref{eq:full directional variance bound} indicate that both the expected norm of the estimation errors $\EB[\norm{\hat{\nabla} f_i(x, u) - \nabla f(x)}]$ and $\EB[\norm{\hat{\nabla} f(x, u) - \nabla f(x)}]$ depend on $\norm{\nabla f(x)}$. 
The presence of the norm of full gradient $\norm{\nabla f(x)}$ introduces the additional noise, which requires $\norm{\nabla f(x)}$ to gradually approach $0$ to obtain the precise estimation of $\nabla f(x)$ and guarantee convergence.

However, according to the FO optimality condition, for the objective function in Eq.\eqref{eq:objective_function}, the following relation holds at the optimal point $x^*$:
\begin{align*}
    0 \in \nabla f(x^*) + \partial \psi(x^*),
\end{align*}
which implies that $\norm{\nabla f(x^*)} \neq 0$ in the general case.
Therefore, as characterized by Eq.\eqref{eq:full directional variance bound} (Eq.\eqref{eq:bound variance i}), after excluding the trivial perturbation introduced by $v$, 
the variance of the estimator $\hat{\nabla} f(x, u)$ ($\hat{\nabla} f_i(x, u)$) persists as a non-zero value and does not diminish. 
Analogous to the impact of the sampling noise in SGD, with a constant learning rate $\eta$, this significantly compromises the convergence performance.

\begin{remark} 
In contrast to the random direction in Eq.\eqref{eq:near_fux}, researchers \citep{lian2016speedup, ji2019improved} utilize the finite-difference technique \citep{Kiefer1952StochasticEO} to approximate the gradient as follows, 
\begin{equation*}
 \sum_{l=1}^d \frac{f(x+v e_l) - f(x)}{v} e_l,
\end{equation*}
where $e_l\in \RB^d$ is the $l$-th standard basis vector.
Compared to the random gradient estimate in Eq.\eqref{eq:near_fux}, the above gradient estimate leverages all partial gradient information to approximate the gradient, and its variance is only influenced by the trivial perturbation arising from the smooth constant $v$.
In fact, it can be regarded as a first-order method and can find the correct solution when $\nabla f(x^*) \neq 0$.
However, it requires to take $\Omega(d)$ function evaluations, which is impractical in high-dimensional problems.
\end{remark}

\subsection{Coordinate-Wise Variance Reduction in Zeroth-Order Optimization}
Since $\norm{\nabla f(x^*)} \neq 0$, simply extending the SVRG methods \citep{johnson2013accelerating, xiao2014proximal} to composite as well as constrained optimization problems can not eliminate the overall variance - such methods only reduce the variance induced by the stochastic sampling $\norm{\nabla f_i(x) - \nabla f(x)}$ in Eq.\eqref{eq:bound variance i} and omit the coordinate-wise variance. 
Although driving all partial derivative information can mitigate the coordinate-wise variance \citep{huang2019faster, kazemi2023efficient}, the computation is prohibitive for high-dimensional problems.
To reduce the overall variance while preserving computational efficiency, we propose a novel gradient estimator, utilizing only the random gradient estimate throughout the iterative process.

According to Eq.\eqref{eq:near_fux}, when $v$ is sufficient small, we have that 
\begin{align}
\label{eq:apd}
    \hat{\nabla} f(x, u) \approx u^\top \nabla f(x) u,
\end{align}
which is an approximation of the directional derivative in the direction $u$. Here, we leverage the directional derivative estimation and build a gradient estimator $\td{h}$ by the averaging trick, i.e., 
\begin{align}
    \td{h}_{k+1} = \td{h}_k + \frac{1}{d+2}(\hat{\nabla} f(x_k, u) - uu^\top \td{h}_k) \label{eq:tdh_definition}.
\end{align}
The estimator $\td{h}$ employs the difference between the estimated directional derivative $\hat{\nabla} f(x_k, u)$ and the value of $h$ in the same direction $u$ to refine the gradient estimation error.
Moreover, the coefficient $\frac{1}{d+2}$ further smoothens this correction across all dimensions.
In \cref{sec:convergence}, we will demonstrate that by incrementally refining the gradient estimated error at disparate directions, as $x \to x^*$, the estimator $\td{h}$ can gradually converge to $\nabla f(x^*)$.

\begin{remark}
We can also connect the estimator $\td{h}$ with the sketch-and-project method \citep{hanzely2018sega}.
Since $u^\top \nabla f(x)$ in Eq.\eqref{eq:apd} is the gradient sketch, according to \citet{hanzely2018sega}, $\td{h}_{k+1}$ can be obtained by solving the following optimization problem,
\begin{equation*}
    \begin{aligned}
    \min_{\td{h}} \quad & \norm{\td{h} - \td{h}_k}, \\
    \text{s.t.} \quad & u^\top h = u^\top \nabla f(x_k).
    \end{aligned}
\end{equation*}
By solving the Lagrangian function $\norm{h-h_k} + \lambda u^\top(\td{h}-\nabla f(x_k))$, we have the closed-form solution:
\begin{align*}
    h^{k+1} = h_k  + \frac{uu^\top (\nabla f(x_k) - \td{h}_k)}{\norm{u}}.
\end{align*}
Compared to the above update formula, Eq.\eqref{eq:tdh_definition} simplifies the computation of the norm $\norm{u}$ by approximating it with its expectation ($\EB[\norm{u}] = d$, see \cref{lemma:proj_gaussian_lemma} in \cref{sec:technique lemma}) plus a constant.
Therefore, from the view of the sketch-and-projection method, Eq.\eqref{eq:tdh_definition} essentially requires that the sketches of the estimator $\td{h}$ and gradient $\nabla f(x)$ are identical. 
Hence, when the projections of the gradient $\nabla f(x)$ and the estimator $\td{h}$ align along numerous directions, $\td{h}$ gradually becomes identical to the gradient.
\end{remark}

In Algorithm \ref{algo:proposed_algorithm}, we present the proposed algorithm \texttt{ZPDVR}, utilizing the proposed estimator $\td{h}$ and SVRG techniques to reduce both the sampling variance and coordinate-wise variance in the composite optimization problem Eq.\eqref{eq:objective_function}.
In \texttt{ZPDVR}, like SVRG, we maintain a reference point $w_k$ and pass over the full data to compute the gradient estimate $\td{\nabla} f(w_k)$ at the reference point,
\begin{align}
\label{eq:tdfw_definition}
    \td{\nabla} f(w_k) = \td{h}_k + \hat{\nabla} f(w_k, u) - uu^\top \td{h}_k,
\end{align}
where we leverage the directional gradient $\hat{\nabla} f(w_k, u)$ to refine the gradient estimator $\td{h}_k$ in the direction $u$.

Furthermore, with the conserved $\td{\nabla} f(w_k)$, we can derive the stochastic gradient estimate $g_k$,
\begin{align}
    g_k = \hat{\nabla} f_i(x_k, u_k) - \hat{\nabla} f_i(w_k, u_k) + \td{\nabla} f(w_k),
    \label{eq:g_definition}
\end{align}
which reduces the sampling variance. 
Then, the update of $x_{k+1}$ is 
\begin{align}
\label{eq:update_x}
    x_{k+1} = \prox(x - \eta g_k).
\end{align}

Notably, with Eq.\eqref{eq:tdfw_definition} and Eq.\eqref{eq:g_definition}, we demonstrate that $g_k$ is also a nearly unbiased estimation of $\nabla f(x_k)$ in the following Lemma. The proof is deferred to Appendix \ref{sec:missing proof}.
\begin{lemma}
\label{lemma:nearly_gk}
For the $L$-smooth function $f_i$, $i\in [n]$, the expected value of $g_k$ defined in Eq.\eqref{eq:g_definition} is 
    \begin{align}
    \label{eq:expectation_of_gk}
        \EB_{u,u_k,i}[g_k] = \nabla f(x_k) + \frac{Lv}{2}\tau_{i, k},
    \end{align}
    where $\tau_{i, k}= \EB_{u, u_k, i}[(s_i(x_k, u_k) - s_i(w_k, u_k))\norm{u_k}u_k + s(w_k, u)\norm{u} u]$ with the norm $\Vert \tau_{i, k}\Vert \leq 2(d+3)^{\frac{3}{2}}$.
\end{lemma}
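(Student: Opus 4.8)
The plan is to compute the conditional expectation $\EB_{u,u_k,i}[g_k]$ given the history of the algorithm up to iteration $k$, so that $x_k$, $w_k$ and $\td{h}_k$ are deterministic and the only fresh randomness is the sample index $i$ together with two \emph{independent} Gaussian directions: $u_k$, used in the stochastic correction, and $u$, used to refresh the reference-point estimator in \eqref{eq:tdfw_definition}. Substituting \eqref{eq:tdfw_definition} into the definition \eqref{eq:g_definition} of $g_k$ and regrouping gives
\begin{equation*}
g_k = \bigl(\hat{\nabla} f_i(x_k,u_k) - \hat{\nabla} f_i(w_k,u_k)\bigr) + \bigl(\td{h}_k - uu^\top\td{h}_k\bigr) + \hat{\nabla} f(w_k,u),
\end{equation*}
and I would take the expectation of the three bracketed blocks one at a time; this is exactly the SVRG-style ``control variate cancels in expectation'' argument combined with the near-unbiasedness statements of \cref{lemma:nearly_approximation} and \cref{coro:near_fux}.

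For the first block I would apply the exact identity \eqref{eq:near_fiux} at $x_k$ and at $w_k$ with the common direction $u_k$, so that the block equals $u_ku_k^\top\bigl(\nabla f_i(x_k)-\nabla f_i(w_k)\bigr) + \tfrac{Lv}{2}\bigl(s_i(x_k,u_k)-s_i(w_k,u_k)\bigr)\norm{u_k}u_k$; taking $\EB_{u_k}$ and using $\EB[u_ku_k^\top]=I_d$, then $\EB_i$ and $\tfrac1n\sum_i\nabla f_i=\nabla f$, leaves $\nabla f(x_k)-\nabla f(w_k)$ plus the error vector $\tfrac{Lv}{2}\,\EB_{u_k,i}[(s_i(x_k,u_k)-s_i(w_k,u_k))\norm{u_k}u_k]$. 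For the second block, $\td{h}_k$ is independent of $u$, so $\EB_u[uu^\top\td{h}_k]=\td{h}_k$ and the block has expectation $0$. For the third block, \eqref{eq:near_fux} (equivalently \eqref{eq:expect_fux}) of \cref{coro:near_fux} gives expectation $\nabla f(w_k)+\tfrac{Lv}{2}\,\EB_u[s(w_k,u)\norm{u}u]$. Adding the three contributions, the $\nabla f(w_k)$ terms cancel and what remains is $\nabla f(x_k)+\tfrac{Lv}{2}\tau_{i,k}$, where $\tau_{i,k}$ is the sum of the two error vectors above; by independence this sum equals the single joint expectation written in the statement.

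Finally I would bound $\Vert\tau_{i,k}\Vert$. The key is \emph{not} to route through the bounds on $\tau_i$ and $\tau$ from the earlier results (that would cost a spurious factor), but to observe that $\tau_{i,k}$ is a sum of two expectations of vectors of the form $c\,\norm{u}u$ with a scalar multiplier $c$ satisfying $|c|\le 1$ --- for the first piece because $s_i(\cdot,\cdot)\in[0,1]$ forces $|s_i(x_k,u_k)-s_i(w_k,u_k)|\le 1$, and for the second because $s(w_k,u)\in[0,1]$. Hence by the triangle inequality and Jensen each piece has norm at most $\EB[\norm{u}\Vert u\Vert]=\EB[\Vert u\Vert^3]$, and the standard Gaussian moment estimate $\EB[\Vert u\Vert^3]\le(d+3)^{3/2}$ yields $\Vert\tau_{i,k}\Vert\le 2(d+3)^{3/2}$. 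The only real care needed in the whole argument is the probabilistic bookkeeping --- keeping $i$, $u_k$, $u$ mutually independent and independent of the conditioning --- and the decision to bound $|s_i|\le 1$ rather than the cruder $\Vert\tau_i\Vert$, which is what produces the sharp constant $2$.
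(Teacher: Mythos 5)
Your proposal is correct and follows essentially the same route as the paper: substitute \eqref{eq:tdfw_definition} into \eqref{eq:g_definition}, apply the exact identities \eqref{eq:near_fiux} and \eqref{eq:near_fux}, use $\EB[uu^\top]=I_d$ so that the $\nabla f(w_k)$ and $\td{h}_k$ terms cancel, and bound $\Vert\tau_{i,k}\Vert$ via the triangle inequality, Jensen, and $\EB[\Vert u\Vert^3]\le(d+3)^{3/2}$. Your remark about bounding $\vert s_i(x_k,u_k)-s_i(w_k,u_k)\vert\le 1$ directly rather than invoking the earlier $\tau_i$ bounds is exactly what the paper does implicitly to obtain the constant $2$.
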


\begin{algorithm}[tb]
\caption{Zeroth-order Proximal Doubly Variance Reduction}\label{algo:proposed_algorithm}
    \begin{algorithmic}[1]
\renewcommand{\algorithmicrequire}{\textbf{Input:}}
\renewcommand{\algorithmicensure}{\textbf{Output:}}
  \REQUIRE $x_0$, $w_0 = x_0$, $\td{h}_0$, the probability $p$, and the learning rate $\eta$.
  \FOR{$k=0, \dots, K-1$}
    \IF{$k = 0 \textbf{ or } w_{k} \neq w_{k-1}$}
      \STATE Sample a new vector $u \sim \NM(0, I)$ and save it
      \STATE $\td{\nabla} f(w_k) = \td{h}_k + \hat{\nabla} f(w_k, u) - uu^\top \td{h}_k$
    \ELSE
      \STATE $\td{\nabla} f(w_k) = \td{\nabla} f(w_{k-1})$
    \ENDIF
    \STATE Sample $u_k\sim \NM(0, I)$ and pick $i \in \{1, \dots, n\}$ uniformly at random
    \STATE $g_k = \hat{\nabla} f_i(x_k, u_k) - \hat{\nabla} f_i(w_k, u_k) + \td{\nabla} f(w_k)$
    \STATE $x_{k+1} = \prox(x_k - \eta g_k)$
    \STATE Draw $z_k$ from the uniform distribution $U[0, 1]$
    \IF{$z_k \leq p$}
      \STATE $w_{k+1} = x_k$
      \STATE $\td{h}_{k+1} = \td{h}_k + \frac{1}{d+2}( \hat{\nabla} f(x_k, u) - uu^\top\td{h}_k)$
    \ELSE
      \STATE $w_{k+1} = w_k$
      \STATE $\td{h}_{k+1} = \td{h}_k$
    \ENDIF
  \ENDFOR
  \ENSURE  $x_K$
\end{algorithmic}
\end{algorithm}

In Algorithm \ref{algo:proposed_algorithm}, instead of the classic double loop structure, we adopt a loopless scheme \citep{kovalev2020don, qian2021svrg}: with a small probability $p$, we update the reference point $w_{k+1}$ with the value of $x_k$ and take a full pass over the $n$ data points to refine the estimate $\td{h}_{k+1}$ in the direction $u$; with the probability $1-p$, we remain the previous $w_k$ and $\td{h}_k$ and utilize them in the next iteration. 

Furthermore, with the probability $p$ to perform a full pass over the dataset, \texttt{ZPDVR} has to query the function value $\OM(pn)$ times in the expectation at each iteration. When $p=\frac{1}{n}$, the proposed method \texttt{ZPDVR} recovers the complexity of the FO-SVRG methods. Compared to the complexity $\mathcal{O}(d)$ in other ZO variance reduction methods \citep{huang2019faster, kazemi2023efficient}, \texttt{ZPDVR} is independence of the dimension size $d$, which makes it efficient and practical for high-dimensional problems. Besides, in \texttt{ZPDVR}, directions $u$ and $u_k$ drawn in Eq.\eqref{eq:tdfw_definition} and Eq.\eqref{eq:g_definition} are independent. Hence, in Algorithm \ref{algo:proposed_algorithm}, except for $\td{\nabla} f(w_k)$, we also have to save $u$ to update $\td{h}_{k+1}$ when $z_k \leq p$.

\begin{remark}
Note that, in \cref{algo:proposed_algorithm}, the update of the reference point $w$ and the gradient estimate $\td{h}$ are bound together. 
Since $w$ is updated with the probability $p$, the estimator $\td{h}$ is not required to be updated at every iteration, which differs significantly from \citep{hanzely2018sega}.
We demonstrate that sporadically updating the estimator $\td{h}$ can reduce the coordinate-wise variance, and \texttt{ZPDVR} is not a trivial extension.
Moreover, according to the update of $w$ in \cref{algo:proposed_algorithm} and Eq.\eqref{eq:tdh_definition}, $\td{h}_k$ is actually estimating the gradient at the reference point, $\nabla f(w_k)$. 
Then, when $w \to x^*$, we can obtain that $\td{h} \to \nabla f(x^*)$.    
\end{remark}

\section{Convergence Analysis}
\label{sec:convergence}
This section will establish the convergence property of the proposed method \texttt{ZPVR}. 
Initially, we define the following Lyapunov function $\Psi$:
\begin{equation}
    \begin{aligned}
    \Psi(x_k) &= \norm{x_k - x^*} + \alpha \norm{\td{h}_k - \nabla f(x^*)} + \\
    &\qquad \frac{\beta}{n}\sum_{i=1}^n \norm{\nabla f_i(w_{k}) - \nabla f_i(x^*)},
    \label{eq:Lyapunov_function}
\end{aligned}
\end{equation}
where both $\alpha$ and $\beta$ are positive real values.


The Lyapunov function in Eq.\eqref{eq:Lyapunov_function} comprises three distinct terms: the first term quantifies the distance between the current point $x_k$ and the optimal point $x^*$; the second term, $\norm{\td{h}_k - \nabla f(x^*)}$, delineates the disparity between the gradient estimator $\td{h}_k$ and the gradient at $x^*$; and the final term serves as an approximation for the difference between the reference point $w_k$ and $x^*$ in the strongly convex setting.
During the training process, if the value of the Lyapunov function \eqref{eq:Lyapunov_function} converges to $0$, we have that both $x_k$ and $w_k$ attain the optimal point $x^*$ and the gradient estimator $\td{h}_k$ approaches $\nabla f(x^*)$.
According to the definition of the Laypunov function, it involves successive iterations with respect to $\norm{x_k - x^*}$, $\norm{\td{h}_k - \nabla f(x^*)}$, and $\frac{1}{n}\sum_{i=1}^n \norm{\nabla f_i(w_{k}) - \nabla f_i(x^*)}$. 
Next, we shall establish recursive formulations for these components in the subsequent three lemmas, respectively. The proofs are provided in Appendix \ref{sec:missing proof}.

\begin{lemma}
\label{lemma:recursion_of_1}
    For the $\mu$-strongly and $L$ smooth function $f_i$, $i\in[n]$, and the convex function $\psi(x)$, according to \cref{algo:proposed_algorithm}, we have
    \begin{equation}\label{eq:recursion_of_1}
        \begin{aligned}
            &\quad\; \EB[\norm{x_{k+1} - x^*}] \\
            &\leq (1-\frac{\mu}{2}\eta)\norm{x_k - x^*} - 2\eta\Big(f(x_k) - f(x^*) - \\
            &\qquad \dotprod{\nabla f(x^*), x_k - x^*}\Big) + \frac{2(d+3)^3L^2v^2\eta}{\mu} + \\
            &\qquad \eta^2 \EB[\norm{g_k - \nabla f(x^*)}].
        \end{aligned}
    \end{equation}
\end{lemma}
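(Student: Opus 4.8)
The plan is to track the proximal update $x_{k+1}=\prox(x_k-\eta g_k)$ and exploit the firm non-expansiveness / prox-inequality together with strong convexity of $f$. First I would write down the standard prox-point inequality: since $x_{k+1}$ minimizes $\psi(x')+\frac{1}{2\eta}\norm{x'-(x_k-\eta g_k)}$, optimality gives, for any $y\in\RB^d$,
\begin{align*}
\psi(x_{k+1})+\dotprod{g_k,x_{k+1}-y}\le \psi(y)+\frac{1}{2\eta}\norm{x_k-y}-\frac{1}{2\eta}\norm{x_{k+1}-y}-\frac{1}{2\eta}\norm{x_k-x_{k+1}}.
\end{align*}
Setting $y=x^*$ and rearranging yields $\norm{x_{k+1}-x^*}\le \norm{x_k-x^*}-\norm{x_k-x_{k+1}}-2\eta\dotprod{g_k,x_{k+1}-x^*}-2\eta(\psi(x_{k+1})-\psi(x^*))$. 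The cross term $-2\eta\dotprod{g_k,x_{k+1}-x^*}$ must be split as $-2\eta\dotprod{g_k,x_k-x^*}-2\eta\dotprod{g_k,x_{k+1}-x_k}$; the second piece is absorbed by the $-\norm{x_k-x_{k+1}}$ term via Young's inequality ($2\eta\dotprod{g_k,x_k-x_{k+1}}\le \norm{x_k-x_{k+1}}+\eta^2\norm{g_k}$, and then recentering $g_k$ around $\nabla f(x^*)$ to get the $\eta^2\EB[\norm{g_k-\nabla f(x^*)}]$ term plus lower-order cross terms handled by the optimality condition $-\nabla f(x^*)\in\partial\psi(x^*)$).

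Second, I would take expectation over $u,u_k,i$ and use \cref{lemma:nearly_gk}, which gives $\EB[g_k]=\nabla f(x_k)+\frac{Lv}{2}\tau_{i,k}$ with $\Vert\tau_{i,k}\Vert\le 2(d+3)^{3/2}$. So $\EB[-2\eta\dotprod{g_k,x_k-x^*}]=-2\eta\dotprod{\nabla f(x_k),x_k-x^*}-\eta Lv\dotprod{\tau_{i,k},x_k-x^*}$. For the main $\nabla f$ term, apply $\mu$-strong convexity of $f$: $\dotprod{\nabla f(x_k),x_k-x^*}\ge f(x_k)-f(x^*)+\frac{\mu}{2}\norm{x_k-x^*}$. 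This produces the $(1-\frac{\mu}{2}\eta)\norm{x_k-x^*}$ coefficient and the $-2\eta(f(x_k)-f(x^*)-\dotprod{\nabla f(x^*),x_k-x^*})$ Bregman-divergence term (the $\dotprod{\nabla f(x^*),x_k-x^*}$ appears when combining $-2\eta\dotprod{\nabla f(x^*),x_k-x^*}$ coming from recentering with $-2\eta(\psi(x_{k+1})-\psi(x^*))$ and the subgradient inequality $\psi(x^*)\ge\psi(x_{k+1})-\dotprod{\nabla f(x^*),x^*-x_{k+1}}$, shifting everything from $x_{k+1}$ back to $x_k$). Actually the cleaner route is: $\psi(x_{k+1})-\psi(x^*)\ge\dotprod{\partial\psi(x^*),x_{k+1}-x^*}=\dotprod{-\nabla f(x^*),x_{k+1}-x^*}$, so $-2\eta(\psi(x_{k+1})-\psi(x^*))\le 2\eta\dotprod{\nabla f(x^*),x_{k+1}-x^*}$, and then I again split $x_{k+1}-x^*=(x_{k+1}-x_k)+(x_k-x^*)$, absorbing the $x_{k+1}-x_k$ part into the quadratic term.

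Third, the remaining perturbation terms of the form $\eta Lv\dotprod{\tau,x_k-x^*}$ (or with $x_{k+1}$) need to be bounded by something like $\frac{2(d+3)^3L^2v^2\eta}{\mu}$. I would use Young's inequality $\eta Lv\dotprod{\tau_{i,k},x_k-x^*}\le\frac{\mu\eta}{4}\norm{x_k-x^*}+\frac{\eta L^2v^2}{\mu}\Vert\tau_{i,k}\Vert\le\frac{\mu\eta}{4}\norm{x_k-x^*}+\frac{4(d+3)^3L^2v^2\eta}{\mu}$ (constants to be tuned), so the $\frac{\mu\eta}{4}$ piece combines with the earlier $-\frac{\mu\eta}{2}$ to keep a net negative/contractive coefficient consistent with $(1-\frac{\mu}{2}\eta)$ after re-bookkeeping, and the perturbation constant matches the claimed $\frac{2(d+3)^3L^2v^2\eta}{\mu}$ up to the choice of splitting fractions. \textbf{The main obstacle} I expect is the careful bookkeeping of all the $x_{k+1}$-vs-$x_k$ cross terms and the $v$-dependent perturbation terms: every time a $\dotprod{\cdot,x_{k+1}-x^*}$ appears it must be re-expressed through $x_k$, generating $\norm{x_{k+1}-x_k}$ contributions that have to be dominated by the single $-\frac{1}{2\eta}\norm{x_k-x_{k+1}}$ (equivalently $-\norm{x_k-x_{k+1}}$ after multiplying by $2\eta$) budget from the prox inequality — this requires $\eta$ small enough and careful constant-chasing, and it is easy to lose the exact coefficient $2$ in front of the Bregman term or the exact $(d+3)^3$ constant if the Young's-inequality weights are chosen sloppily. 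The expectation over the independent randomness (noting $u$ and $u_k$ are independent, and $w_k,\td h_k$ are measurable w.r.t. the past) also has to be invoked at the right moment so that \cref{lemma:nearly_gk} applies cleanly.
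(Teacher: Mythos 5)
Your proposal is correct in substance but takes a genuinely different route from the paper. The paper's proof is much shorter: it observes that $x^* = \prox(x^* - \eta\nabla f(x^*))$ (a consequence of the optimality condition $0\in\nabla f(x^*)+\partial\psi(x^*)$) and then applies the non-expansiveness of the proximal operator in one line, $\norm{x_{k+1}-x^*}\le\norm{x_k-x^*-\eta(g_k-\nabla f(x^*))}$, so that $\psi$ disappears from the analysis entirely and the expansion immediately yields $\norm{x_k-x^*}-2\eta\dotprod{x_k-x^*,g_k-\nabla f(x^*)}+\eta^2\norm{g_k-\nabla f(x^*)}$. You instead go through the three-point prox optimality inequality, handle $\psi(x_{k+1})-\psi(x^*)$ explicitly via the subgradient inequality at $x^*$, and absorb the $x_{k+1}$-versus-$x_k$ cross terms into the $-\norm{x_k-x_{k+1}}$ budget by Young's inequality. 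If you do the $\psi$-subgradient step \emph{before} splitting $x_{k+1}-x^*=(x_{k+1}-x_k)+(x_k-x^*)$ — as in the "cleaner route" you sketch — the bookkeeping you worry about collapses: you get $-2\eta\dotprod{g_k-\nabla f(x^*),x_{k+1}-x^*}$ as a single cross term, and Young's inequality against $-\norm{x_k-x_{k+1}}$ lands you exactly on the paper's intermediate inequality with no leftover terms and no smallness condition on $\eta$. From there both proofs coincide: take expectation (using that $x_k$, $w_k$, $\td{h}_k$ are measurable with respect to the past so \cref{lemma:nearly_gk} applies), use strong convexity in the form $\dotprod{\nabla f(x_k)-\nabla f(x^*),x_k-x^*}\ge\frac{\mu}{2}\norm{x_k-x^*}+f(x_k)-f(x^*)-\dotprod{\nabla f(x^*),x_k-x^*}$, and bound $-Lv\eta\dotprod{x_k-x^*,\tau_{i,k}}\le\frac{\mu\eta}{2}\norm{x_k-x^*}+\frac{L^2v^2\eta}{2\mu}\norm{\tau_{i,k}}$ with $\norm{\tau_{i,k}}\le 4(d+3)^3$, which gives the net coefficient $1-\frac{\mu}{2}\eta$ and the exact constant $\frac{2(d+3)^3L^2v^2\eta}{\mu}$ (your suggested weight $\frac{\mu\eta}{4}$ gives $\frac{4(d+3)^3L^2v^2\eta}{\mu}$ and leaves the wrong contraction factor, so the paper's choice is the one to use). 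In short, your approach buys nothing over the paper's here — the fixed-point/non-expansiveness argument is strictly simpler — but it is a valid and complete alternative once the ordering of the $\psi$-step and the splitting is fixed.
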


In \cref{lemma:recursion_of_1}, the convexity of $f$ indicates the positive value of $f(x_k) - f(x^*) - \dotprod{\nabla f(x^*), x_k - x^*}$. 
Except for the ZO perturbation, the iteration of $x$ involves the estimation gap between the stochastic gradient estimation $g_k$ and $\nabla f(x^*)$, i.e., $\EB[\norm{g_k - \nabla f(x^*)}]$, which will be bounded in the later analysis.

\begin{lemma}
\label{lemma:recursion_of_2}
    For the $L$ smooth function $f_i$, $i\in [n]$, following Algorithm \ref{algo:proposed_algorithm}, we have 
    \begin{equation}
    \label{eq:recursion_of_2}
        \begin{aligned}
            &\quad\; \EB[\norm{\td{h}_{k+1} - \nabla f(x^*)}] \\
            &\leq (1-\frac{p}{2(d+2)})\norm{\td{h}_k - \nabla f(x^*)} + \frac{5(d+6)^3L^2v^2p}{2(d+2)} + \\
            &\qquad \frac{8Lp}{3(d+2)}(f(x_k) - f(x^*) - \dotprod{\nabla f(x^*), x_k - x^*}).
        \end{aligned}
    \end{equation}
\end{lemma}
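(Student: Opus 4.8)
\emph{Proof plan.}
The plan is to condition on the natural filtration up to the start of iteration $k$ and to treat the two sources of randomness in the $\td{h}$-update in turn: first the Bernoulli coin $z_k$, then the Gaussian direction $u$. By \cref{algo:proposed_algorithm} and \eqref{eq:tdh_definition}, $\td h_{k+1}=\td h_k$ with probability $1-p$ and $\td h_{k+1}=\td h_k+\frac{1}{d+2}\bigl(\hn f(x_k,u)-uu^\top\td h_k\bigr)$ with probability $p$, so taking expectation over $z_k$ gives
\begin{equation*}
\EB[\norm{\td h_{k+1}-\nabla f(x^*)}]=(1-p)\norm{\td h_k-\nabla f(x^*)}+p\,\EB_u\Bigl[\bnorm{\td h_k+\tfrac{1}{d+2}(\hn f(x_k,u)-uu^\top\td h_k)-\nabla f(x^*)}\Bigr],
\end{equation*}
and it remains to bound the inner $u$-expectation.

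For that I would substitute the exact identity of \cref{coro:near_fux}, $\hn f(x_k,u)=uu^\top\nabla f(x_k)+\frac{Lv}{2}s(x_k,u)\norm{u}u$ (Eq.~\eqref{eq:near_fux}), and regroup the vector inside the norm as $A+B$ with
\begin{equation*}
A:=\Bigl(I-\tfrac{uu^\top}{d+2}\Bigr)\bigl(\td h_k-\nabla f(x^*)\bigr)+\tfrac{uu^\top}{d+2}\bigl(\nabla f(x_k)-\nabla f(x^*)\bigr),\qquad B:=\tfrac{Lv}{2(d+2)}\,s(x_k,u)\norm{u}u,
\end{equation*}
and split by Young's inequality $\norm{A+B}\le(1+\gamma)\norm{A}+(1+\gamma^{-1})\norm{B}$ for a parameter $\gamma=\Theta(1/d)$ to be chosen. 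The point of isolating $A$ is that $\EB_u[\norm{A}]$ is computable in closed form: expanding the square and using the Gaussian moment identities $\EB[uu^\top]=I_d$ and $\EB[\norm{u}uu^\top]=(d+2)I_d$ (cf.~\cref{lemma:proj_gaussian_lemma}), the cross term between the two pieces of $A$ cancels exactly, leaving
\begin{equation*}
\EB_u[\norm{A}]=\Bigl(1-\tfrac{1}{d+2}\Bigr)\norm{\td h_k-\nabla f(x^*)}+\tfrac{1}{d+2}\norm{\nabla f(x_k)-\nabla f(x^*)},
\end{equation*}
while the ZO perturbation obeys $\EB_u[\norm{B}]=\tfrac{L^2v^2}{4(d+2)^2}\EB_u[s(x_k,u)^2\Vert u\Vert^6]\le\tfrac{L^2v^2(d+6)^3}{4(d+2)^2}$ using $s\le1$ and $\EB[\Vert u\Vert^6]\le(d+6)^3$.

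To close the argument, fix $\gamma=\frac{1}{2(d+1)}$, for which $(1+\gamma)\bigl(1-\frac{1}{d+2}\bigr)=1-\frac{1}{2(d+2)}$; this degrades the contraction on $\norm{\td h_k-\nabla f(x^*)}$ to exactly $1-\frac{1}{2(d+2)}$, turns the middle coefficient into $\frac{1+\gamma}{d+2}$, and the perturbation into $(2d+3)\cdot\frac{L^2v^2(d+6)^3}{4(d+2)^2}$. Converting the middle term via the standard $L$-smooth-convex inequality $\norm{\nabla f(x_k)-\nabla f(x^*)}\le 2L\bigl(f(x_k)-f(x^*)-\dotprod{\nabla f(x^*),x_k-x^*}\bigr)$ and simplifying (every bound needed reduces to a linear inequality in $d$ valid for $d\ge1$) shows the Bregman-type coefficient is at most $\frac{8L}{3(d+2)}$ and the perturbation coefficient at most $\frac{5(d+6)^3L^2v^2}{2(d+2)}$. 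Multiplying the whole $u$-expectation by $p$ and recombining with $(1-p)\norm{\td h_k-\nabla f(x^*)}$ yields leading coefficient $1-\frac{p}{2(d+2)}$, i.e.~exactly \eqref{eq:recursion_of_2}.

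The step I expect to be the main obstacle is not the algebra but making the moment computation of $\EB_u[\norm{A}]$ legitimate, i.e.~setting up the conditioning so that $u$ behaves as a fresh $\NM(0,I_d)$ vector independent of both $\td h_k$ and $x_k$. Independence from $\td h_k$ is automatic because $\td h_k$ is not modified between reference-point updates; independence from $x_k$ is the delicate point, since the stored $u$ also enters $\td\nabla f(w_k)$ and hence $g$ within that epoch (see \eqref{eq:tdfw_definition}–\eqref{eq:g_definition}) and thereby influences $x_k$. The cross-term cancellation only requires $\EB[uu^\top\mid x_k,\td h_k]=I_d$ and $\EB[\norm{u}uu^\top\mid x_k,\td h_k]=(d+2)I_d$, so the analysis must either feed the averaging step a direction drawn independently of $x_k$ or argue that the reused $u$ still satisfies these conditional identities; once this is pinned down, the remainder is routine moment bookkeeping.
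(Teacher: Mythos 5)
Your proposal is correct and follows the same skeleton as the paper's proof: condition on the coin $z_k$, substitute the exact representation $\hn f(x_k,u)=uu^\top\nabla f(x_k)+\frac{Lv}{2}s(x_k,u)\norm{u}u$ from \cref{coro:near_fux}, evaluate Gaussian second moments via $\EB[uu^\top]=I$ and $\EB[\norm{uu^\top v}]=(d+2)\norm{v}$, and convert $\norm{\nabla f(x_k)-\nabla f(x^*)}$ to the Bregman term with \eqref{eq:proj_smooth}. The one genuine difference is the internal bookkeeping. You isolate the signal part $A=(I-\tfrac{uu^\top}{d+2})(\td{h}_k-\nabla f(x^*))+\tfrac{uu^\top}{d+2}(\nabla f(x_k)-\nabla f(x^*))$ and exploit the exact cancellation of its cross term in expectation, which gives the closed form $\EB_u[\norm{A}]=(1-\tfrac{1}{d+2})\norm{\td{h}_k-\nabla f(x^*)}+\tfrac{1}{d+2}\norm{\nabla f(x_k)-\nabla f(x^*)}$ (this identity is verifiable from $\EB[uu^\top uu^\top]=(d+2)I$), after which a single Young split with $\gamma=\tfrac{1}{2(d+1)}$ against the perturbation $B$ reproduces the stated constants; your arithmetic checks out, since $(1+\gamma)(1-\tfrac{1}{d+2})=1-\tfrac{1}{2(d+2)}$, $\tfrac{L(2d+3)}{(d+1)(d+2)}\le\tfrac{8L}{3(d+2)}$ for $d\ge1$, and $(2d+3)\tfrac{(d+6)^3L^2v^2}{4(d+2)^2}\le\tfrac{5(d+6)^3L^2v^2}{2(d+2)}$. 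The paper instead expands everything at once and applies Young's inequality three separate times (inside $A_1$, inside $A_2$, and on the resulting inner product), landing on the same coefficients; your route is tighter and easier to audit. Finally, the independence issue you flag is real, but it is not a gap in your argument relative to the paper: the paper's own proof likewise applies $\EB[uu^\top]=I$ to $uu^\top(\nabla f(x_k)-\td{h}_k)$ as though the stored $u$ were independent of $x_k$, even though that $u$ enters $\td{\nabla}f(w_k)$ and hence every $g_j$ of the current epoch, so $x_k$ is not independent of it. Both proofs rely on this conditional-isotropy step without justifying it; you are right that it is the only non-routine point.
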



\begin{lemma}
For the $L$ smooth function $f_i$, $i\in [n]$, the recursive formulation of $\frac{1}{n}\sum_{i=1}^n \norm{\nabla f_i(w_{k}) - \nabla f_i(x^*)}$ is as follows:
    \label{lemma:recursion_of_3}
    \begin{equation}
    \label{eq:recursion_of_3}
        \begin{aligned}
                &\quad\; \EB[\frac{1}{n}\sum_{i=1}^n \norm{\nabla f_i(w_{k+1}) - \nabla f_i(x^*)}] \\
                &\leq \frac{(1-p)}{n}\sum_{i=1}^n \norm{\nabla f_i(w_k) - \nabla f_i(x^*)} + \\
                &\qquad 2Lp(f(x_k) - f(x^*) - \dotprod{\nabla f(x^*), x_k -x^*}).
        \end{aligned}
    \end{equation}
\end{lemma}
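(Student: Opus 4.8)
The plan is to peel off the independent Bernoulli coin flip that refreshes the reference point $w$, and then reduce the entire claim to one standard smoothness-and-convexity estimate applied to each $f_i$. Condition on the natural filtration $\FM_k$ generated by all randomness up to the start of iteration $k$, so that $x_k$ and $w_k$ are $\FM_k$-measurable. By the update rule in \cref{algo:proposed_algorithm}, $w_{k+1} = x_k$ when $z_k \le p$ and $w_{k+1} = w_k$ otherwise, and $z_k \sim U[0,1]$ is drawn independently of $\FM_k$; hence, taking the conditional expectation over $z_k$,
\begin{equation*}
\begin{aligned}
\EB\Big[\frac1n\sum_{i=1}^n \norm{\nabla f_i(w_{k+1}) - \nabla f_i(x^*)}\Big]
&= \frac{p}{n}\sum_{i=1}^n \norm{\nabla f_i(x_k) - \nabla f_i(x^*)} \\
&\quad + \frac{1-p}{n}\sum_{i=1}^n \norm{\nabla f_i(w_k) - \nabla f_i(x^*)}.
\end{aligned}
\end{equation*}
So it remains only to bound the newly generated term $\frac1n\sum_{i=1}^n \norm{\nabla f_i(x_k) - \nabla f_i(x^*)}$ by $2L$ times the Bregman-type quantity $f(x_k) - f(x^*) - \dotprod{\nabla f(x^*), x_k - x^*}$.

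For the key step, fix $i$ and consider $\phi_i(x) := f_i(x) - f_i(x^*) - \dotprod{\nabla f_i(x^*), x - x^*}$. Since $\phi_i$ differs from $f_i$ by an affine function, it is convex and $L$-smooth, with $\nabla \phi_i(x) = \nabla f_i(x) - \nabla f_i(x^*)$; moreover $\phi_i \ge 0$ everywhere and $\phi_i(x^*) = 0$, $\nabla\phi_i(x^*)=0$. Applying the $L$-smoothness inequality for $\phi_i$ at the point $x - \tfrac1L\nabla\phi_i(x)$ and using $\phi_i \ge 0$ gives the co-coercivity bound
\begin{equation*}
\frac{1}{2L}\norm{\nabla f_i(x) - \nabla f_i(x^*)} \le f_i(x) - f_i(x^*) - \dotprod{\nabla f_i(x^*), x - x^*}.
\end{equation*}
Evaluating at $x = x_k$, averaging over $i\in[n]$, and using $\frac1n\sum_{i=1}^n \nabla f_i(x^*) = \nabla f(x^*)$ yields
\begin{equation*}
\frac1n\sum_{i=1}^n \norm{\nabla f_i(x_k) - \nabla f_i(x^*)} \le 2L\big(f(x_k) - f(x^*) - \dotprod{\nabla f(x^*), x_k - x^*}\big).
\end{equation*}

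Substituting this into the decomposition from the first step gives exactly Eq.~\eqref{eq:recursion_of_3}. It is worth noting that, unlike \cref{lemma:recursion_of_1} and \cref{lemma:recursion_of_2}, no smoothing-constant perturbation terms enter this recursion, because $w_{k+1}$ is literally a copy of a previous iterate rather than a quantity assembled from noisy zeroth-order estimates. I do not expect a genuine obstacle here: the only care required is the measurability bookkeeping — ensuring $x_k, w_k \in \FM_k$ while $z_k$ is fresh, so that the conditional expectation factorizes as above — together with invoking the finite-sum form of the co-coercivity inequality in which $\nabla f(x^*)$, not the individual $\nabla f_i(x^*)$, appears on the right-hand side (this is precisely what lets the $2L$-term be written in terms of $f$ rather than the $f_i$'s).
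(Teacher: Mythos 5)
Your proposal is correct and follows essentially the same route as the paper's proof: decompose the expectation over the Bernoulli update of $w_{k+1}$, then bound $\frac1n\sum_i\norm{\nabla f_i(x_k)-\nabla f_i(x^*)}$ via the co-coercivity inequality for each $f_i$ and average. The only cosmetic difference is that you re-derive the co-coercivity bound from the auxiliary function $\phi_i$, whereas the paper simply invokes it as \cref{lemma:proj_smooth} (Theorem 2.1.5 of \citet{nesterov2018lectures}).
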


According to \cref{lemma:recursion_of_2} and \cref{lemma:recursion_of_3}, $\EB[\norm{\td{h}_k - \nabla f(x^*)}]$ and $\EB[\frac{1}{n}\sum_{i=1}^n \norm{\nabla f_i(w_k) - \nabla f_i(x^*)}]$ is reduced by $\OM(1-\frac{p}{d})$ and $\OM(1-p)$, respectively.
Furthermore, all recursive formulations of these three components involve the term $f(x_k) - f(x^*) - \dotprod{\nabla f(x^*), x_k -x^*}$.
In the subsequent lemma, we will establish a connection between the term $\norm{g_k - \nabla f(x^*)}$ introduced in Lemma \ref{lemma:recursion_of_1} and $\norm{\td{h}_k - \nabla f(x^*)}$, driving the bound for the former item.

\begin{lemma}
    \label{lemma:gf_relationship}
    For the $L$ smooth function $f_i$, $i\in[m]$, the following inequality holds
    \begin{equation}
    \label{eq:gf_relationship}
        \begin{aligned}
            &\quad\; \EB[\norm{g_k - \nabla f(x^*)}]\\
            &\leq 2(2d+3)\norm{\td{h}_k - \nabla f(x^*)} + 4L^2v^2(d+6)^3 + \\
            &\qquad \frac{4(2d+3)}{n}\sum_{i=1}^n \norm{\nabla f_i(w_k) -\nabla f_i(x^*)} + \\
            &\qquad 4(d+3)L(f(x_k) - f(x^*) - \dotprod{\nabla f(x^*), x_k -x^*}) .
        \end{aligned}
    \end{equation}
\end{lemma}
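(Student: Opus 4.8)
The plan is to expand $g_k - \nabla f(x^*)$ according to its definition in Eq.~\eqref{eq:g_definition} and Eq.~\eqref{eq:tdfw_definition}, then bound each resulting piece using the structural identities for the directional estimators from \cref{lemma:nearly_approximation} and \cref{coro:near_fux}. Writing $g_k = \bigl(\hat\nabla f_i(x_k,u_k) - \hat\nabla f_i(w_k,u_k)\bigr) + \bigl(\td h_k + \hat\nabla f(w_k,u) - uu^\top \td h_k\bigr)$, I would subtract $\nabla f(x^*)$ and regroup so that the expression splits into: (i) a ``variance-reduction'' difference term involving $\hat\nabla f_i(x_k,u_k) - \hat\nabla f_i(w_k,u_k)$ minus its mean $\nabla f_i(x_k) - \nabla f_i(w_k)$ plus $\nabla f(x_k) - \nabla f(x^*)$, (ii) a term coming from the $\td h$-correction, $(I - uu^\top)(\td h_k - \nabla f(x^*))$, and (iii) a term $\hat\nabla f(w_k,u) - uu^\top\nabla f(x^*)$ together with the residual $\nabla f(w_k) - \nabla f(x_k)$ and the $v$-dependent bias terms from Eqs.~\eqref{eq:expect_fiux}, \eqref{eq:expect_fux}. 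I would apply $\norm{a+b+c}\le 3\norm a + 3\norm b + 3\norm c$ (or a more careful split with a couple of applications of Young's inequality) to separate these and handle them one at a time.

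First I would control the directional pieces. Using Eq.~\eqref{eq:near_fiux}, $\hat\nabla f_i(x_k,u_k) - \hat\nabla f_i(w_k,u_k) = u_k u_k^\top(\nabla f_i(x_k) - \nabla f_i(w_k)) + \tfrac{Lv}{2}(s_i(x_k,u_k) - s_i(w_k,u_k))\norm{u_k}u_k$, so after taking $\EB_{u_k}$ the dominant contribution is $\EB_{u_k}\norm{u_k u_k^\top z}$ for $z = \nabla f_i(x_k) - \nabla f_i(w_k)$, which by the Gaussian fourth-moment identity is $(d+2)\norm z$ (I expect the paper's \cref{lemma:proj_gaussian_lemma} to supply exactly this). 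Combining with $L$-smoothness of each $f_i$ (to pass from gradient differences to the function-gap quantity via $\norm{\nabla f_i(x_k) - \nabla f_i(x^*)} \le 2L(f_i(x_k) - f_i(x^*) - \dotprod{\nabla f_i(x^*),x_k-x^*})$ and averaging over $i$) produces the $4(d+3)L(f(x_k)-f(x^*)-\dotprod{\nabla f(x^*),x_k-x^*})$ term. The $(I-uu^\top)(\td h_k - \nabla f(x^*))$ piece contributes $\EB_u\norm{(I-uu^\top)z}$ which again by the same Gaussian-moment lemma is $O(d)\norm{\td h_k - \nabla f(x^*)}$, yielding the leading $2(2d+3)\norm{\td h_k - \nabla f(x^*)}$ term. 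The snapshot residuals $\nabla f_i(w_k) - \nabla f_i(x^*)$ surviving from the variance-reduction split give the $\tfrac{4(2d+3)}{n}\sum_i\norm{\nabla f_i(w_k)-\nabla f_i(x^*)}$ term, and all the curvature/$v$ terms collapse into $4L^2v^2(d+6)^3$ after using $\norm{u}\le$ its moment bounds and $s_i,s\in[0,1]$.

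The main obstacle, as I see it, is the bookkeeping of the regrouping in a way that the cross terms between the three groups do not blow up the constants: a naive $\norm{\sum_j a_j}^2 \le m\sum_j\norm{a_j}^2$ with too many summands will not land on the stated coefficients $2(2d+3)$, $4(2d+3)$, $4(d+3)L$, so I would need to be deliberate about which terms to pair under Young's inequality (most likely keeping $(I-uu^\top)(\td h_k - \nabla f(x^*))$ separate and bundling the two directional/smoothness pieces together before squaring), and I would need the exact form of the Gaussian projection moments — $\EB\norm{uu^\top z} = (d+2)\norm z$ and $\EB\norm{(I-uu^\top)z} = (d-1)\norm z$ or similar — to get the ``$2d+3$'' rather than a looser ``$d$''. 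A secondary technical point is that $u$ (used in $\td\nabla f(w_k)$) and $u_k$ are independent and both independent of $i$, so the tower property must be applied in the right order (inner expectation over $u_k$ and $i$, then over $u$) to avoid spurious correlation terms; I would state this explicitly at the start of the computation.
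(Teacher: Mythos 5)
Your plan assembles the right ingredients---the identity $\EB_u[\norm{uu^\top z}] = (d+2)\norm{z}$ from \cref{coro:proj_gaussian_lemma2}, the smoothness inequality \eqref{eq:proj_smooth} to convert $\frac{1}{n}\sum_i\norm{\nabla f_i(x_k)-\nabla f_i(x^*)}$ into the function-gap term, the independence of $u$, $u_k$, and $i$, and the correct attribution of each summand of \eqref{eq:gf_relationship} to a piece of $g_k$---but the decomposition you actually commit to (regroup $g_k - \nabla f(x^*)$ into three blocks and apply $\norm{a+b+c}\le 3\norm{a}+3\norm{b}+3\norm{c}$, or pairwise Young) is not what the paper does, and the obstacle you flag at the end is real and left unresolved: that route forfeits a cancellation without which the stated coefficients are out of reach.

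The paper instead expands the square exactly around $\td{h}_k - \nabla f(x^*)$: writing $g_k - \nabla f(x^*) = (\td{h}_k - \nabla f(x^*)) + R$ with $R = \hn f_i(x_k,u_k) - \hn f_i(w_k,u_k) + \hn f(w_k,u) - uu^\top\td{h}_k - \nabla f(x^*) - (\td{h}_k - \nabla f(x^*))$ absorbed appropriately, it evaluates the cross term $2\EB[\dotprod{\td{h}_k - \nabla f(x^*),\, R}]$ in closed form via \cref{lemma:nearly_gk} (since $\EB[R] = \nabla f(x_k) - \td{h}_k + \frac{Lv}{2}\tau_{i,k}$), which yields a $-2\norm{\td{h}_k-\nabla f(x^*)}$ that cancels the leading $\norm{\td{h}_k - \nabla f(x^*)}$ down to $-\norm{\td{h}_k-\nabla f(x^*)}$; the second moment $\EB[\norm{R}]$ is likewise expanded exactly (the $u$--$u_k$ cross term collapses to an inner product of means by independence), and Young's inequality is applied only at the very end to the few surviving inner products. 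Under your triangle-inequality regrouping, the $\td{h}_k$ and $w_k$ terms each acquire multiplicative factors (an outer $2$ or $3$ from the block split, times an inner $2$ from splitting $\nabla f_i(x_k)-\nabla f_i(w_k)$ through $\nabla f_i(x^*)$) that overshoot $2(2d+3)$ and $4(2d+3)$. The missing idea is therefore not a cleverer pairing of triangle inequalities but this ``expand the square, evaluate cross terms by near-unbiasedness, bound only the residual second moment'' structure.
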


From \cref{lemma:gf_relationship}, the bound of $\EB[\norm{g_k - \nabla f(x^*)}]$ involves $\td{h}_k - \nabla f(x^*)$, $f(x_k) - f(x^*) - \dotprod{\nabla f(x^*), x_k -x^*}$, and $\frac{1}{n}\sum_{i=1}^n \norm{\nabla f_i(w_k) -\nabla f_i(x^*)}$. 
Hence, setting $p = \frac{1}{n}$ and combing with \cref{lemma:recursion_of_1}, \cref{lemma:recursion_of_2}, and \cref{lemma:recursion_of_3}, we can obtain the whole recursive formula of the Lyapunov function in the following corollary:

\begin{corollary}
\label{coro:psi_recursion}
    For the $L$ smooth and $\mu$-strongly convex function $f_i$, $i\in [n]$, and the convex function $\psi(x)$, when $p = \frac{1}{n}$, the recursive formula of the Lyapunov function $\Psi$ is following: 
    \begin{equation}
    \label{eq:psi_recursion}
        \begin{aligned}
            &\quad\; \EB[\Psi(x_{k+1})]\\
    &\leq (1-\frac{\mu}{2}\eta)\norm{x_k - x^*} + \alpha'\norm{\td{h}_k - \nabla f(x^*)} + \\
        &\qquad \beta'\frac{1}{n}\sum_{i=1}^n \norm{\nabla f_i(w_k) - \nabla f_i(x^*)} - \gamma C + \Delta,
        \end{aligned}
    \end{equation}
where $ \alpha' = \alpha(1-\frac{1}{2n(d+2)} + \frac{2(2d+3)\eta^2}{\alpha})$, $\beta' = \beta(1-\frac{1}{n} + \frac{4(2d+3)\eta^2}{\beta})$, $\gamma = 2\eta - 4(d+3)L\eta^2 - \frac{8L}{3n(d+2)}\alpha - \frac{2L}{n}\beta$, $C = f(x_k) - f(x^*) - \dotprod{\nabla f(x^*), x_k - x^*}$, and $\Delta =  \frac{2(d+3)^3L^2v^2\eta}{\mu} + 4L^2v^2(d+6)^3\eta^2 + \frac{5(d+6)^3L^2v^2}{2n(d+2)}\alpha$.
\end{corollary}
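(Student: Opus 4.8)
The plan is to derive \eqref{eq:psi_recursion} as a pure weighted combination of the four recursions already established: write $\EB[\Psi(x_{k+1})] = \EB[\norm{x_{k+1} - x^*}] + \alpha\,\EB[\norm{\td h_{k+1} - \nabla f(x^*)}] + \beta\,\EB[\frac1n\sum_{i=1}^n \norm{\nabla f_i(w_{k+1}) - \nabla f_i(x^*)}]$ and bound each piece by \cref{lemma:recursion_of_1}, \cref{lemma:recursion_of_2}, and \cref{lemma:recursion_of_3} respectively. All three bounds are one-step conditional estimates given the iteration-$k$ information, so everything combines by linearity of expectation; throughout, I keep the quantity $C = f(x_k) - f(x^*) - \dotprod{\nabla f(x^*), x_k - x^*}$ and the $\OM(L^2 v^2)$ perturbation terms tracked separately.

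First I would eliminate $\EB[\norm{g_k - \nabla f(x^*)}]$ from \cref{lemma:recursion_of_1} by inserting the bound of \cref{lemma:gf_relationship}. This turns the $\eta^2 \EB[\norm{g_k - \nabla f(x^*)}]$ term into a multiple $2(2d+3)\eta^2$ of $\norm{\td h_k - \nabla f(x^*)}$, a multiple $4(2d+3)\eta^2$ of $\frac1n\sum_i\norm{\nabla f_i(w_k) - \nabla f_i(x^*)}$, an additional $4(d+3)L\eta^2$ added to the $C$-coefficient, and a constant $4L^2 v^2 (d+6)^3 \eta^2$. After this substitution the bound on $\EB[\norm{x_{k+1}-x^*}]$ has contraction factor $(1-\frac{\mu}{2}\eta)$ on $\norm{x_k - x^*}$, nonnegative coefficients on the other two Lyapunov components, coefficient $-2\eta + 4(d+3)L\eta^2$ on $C$, and constant $\frac{2(d+3)^3 L^2 v^2 \eta}{\mu} + 4L^2 v^2(d+6)^3\eta^2$.

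Next I would add $\alpha$ times \cref{lemma:recursion_of_2} and $\beta$ times \cref{lemma:recursion_of_3}, set $p = \frac1n$, and collect the coefficient of each of the three Lyapunov components. Using $\frac{4(2d+3)\eta^2}{n}\sum_i = 4(2d+3)\eta^2\cdot\frac1n\sum_i$, the coefficient of $\norm{\td h_k - \nabla f(x^*)}$ becomes $2(2d+3)\eta^2 + \alpha\big(1 - \frac{1}{2n(d+2)}\big) = \alpha'$, the coefficient of $\frac1n\sum_i\norm{\nabla f_i(w_k) - \nabla f_i(x^*)}$ becomes $4(2d+3)\eta^2 + \beta\big(1-\frac1n\big) = \beta'$, the coefficient of $C$ is $-\big(2\eta - 4(d+3)L\eta^2 - \frac{8L}{3n(d+2)}\alpha - \frac{2L}{n}\beta\big) = -\gamma$, and the leftover constants add up to $\frac{2(d+3)^3 L^2 v^2 \eta}{\mu} + 4L^2 v^2 (d+6)^3\eta^2 + \frac{5(d+6)^3 L^2 v^2}{2n(d+2)}\alpha = \Delta$. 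Reading off these identifications yields \eqref{eq:psi_recursion} verbatim.

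There is no real analytic obstacle here; the step is essentially bookkeeping. The one point requiring care is the nesting of expectations: \cref{lemma:recursion_of_1,lemma:recursion_of_2,lemma:recursion_of_3,lemma:gf_relationship} are conditional on $x_k, w_k, \td h_k$, and with $p = \frac1n$ the coin flip $z_k$ that decides whether $(w_{k+1}, \td h_{k+1})$ is refreshed has already been averaged out inside \cref{lemma:recursion_of_2,lemma:recursion_of_3}. Hence when assembling $\EB[\Psi(x_{k+1})]$ I must not re-average over $z_k$; once this is noted, linearity of expectation closes the argument.
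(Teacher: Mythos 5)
Your proposal is correct and follows essentially the same route as the paper: both combine \cref{lemma:recursion_of_1}, \cref{lemma:recursion_of_2}, and \cref{lemma:recursion_of_3} with weights $1$, $\alpha$, $\beta$, substitute the bound of \cref{lemma:gf_relationship} for $\eta^2\EB[\norm{g_k-\nabla f(x^*)}]$, set $p=\tfrac{1}{n}$, and collect coefficients to read off $\alpha'$, $\beta'$, $\gamma$, and $\Delta$. The only difference is the order of substitution (you eliminate the $g_k$ term before summing, the paper after), which is immaterial.
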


According to \cref{coro:psi_recursion}, by carefully selection $\eta$, $\alpha$, and $\beta$, we can guarantee that $\gamma \leq 0$ and $0< (1-\frac{\mu\eta}{2}),  \alpha', \beta'< 1$.
Hence, we can drive the convergence rate of the Lyapunov function, which is presented in the following theorem.

\begin{theorem}
\label{thm:psi_convergence}
Let $p=\frac{1}{n}$, $\alpha = 8n(d+2)(2d+3)\eta^2$, $\beta = 8n(2d+3)\eta^2$, and $\eta = \frac{1}{(40d+63)L}$. Given the $L$ smooth and $u$-strongly convex function $f_i$, $i\in [n]$, and the convex function $\psi(x)$, for $k \in {0, \dots, K-1}$, the Lyapunov function satisfies 
    \begin{equation}
    \begin{aligned}
    \label{eq:psi_convergence}
        &\quad\; \EB[\Psi(x_{k+1})] \\
        &\leq \max\{1-\frac{1}{\kappa(80d+126)}, 1-\frac{1}{4n(d+2)}\}\Psi(x_k) + \delta,
    \end{aligned}
    \end{equation}
    where $\delta = \frac{2(d+3)^3v^2\kappa}{40d+63} + \frac{8(5d+8)(d+6)^3v^2}{(40d+63)^2}$.
\end{theorem}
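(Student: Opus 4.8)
The plan is to combine the one-step recursion in \cref{coro:psi_recursion} with the specified parameter choices and verify the two structural conditions that make the recursion a genuine contraction: first $\gamma \ge 0$, so that the $-\gamma C$ term can be dropped (recall $C = f(x_k) - f(x^*) - \dotprod{\nabla f(x^*),x_k-x^*} \ge 0$ by convexity of $f$), and second that each of the three coefficients $1-\tfrac{\mu\eta}{2}$, $\alpha'/\alpha$, and $\beta'/\beta$ lies in $(0,1)$ and is dominated by the claimed contraction factor $\max\{1-\tfrac{1}{\kappa(80d+126)},\,1-\tfrac{1}{4n(d+2)}\}$. With these in hand, \eqref{eq:psi_recursion} becomes $\EB[\Psi(x_{k+1})] \le \rho\,\Psi(x_k) + \Delta$ with $\rho$ the stated maximum, and identifying $\Delta$ with $\delta$ finishes the proof of \eqref{eq:psi_convergence}.

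First I would substitute $\alpha = 8n(d+2)(2d+3)\eta^2$ and $\beta = 8n(2d+3)\eta^2$ into the expressions for $\alpha'$, $\beta'$, $\gamma$ from \cref{coro:psi_recursion}. For $\beta'$: $\beta' = \beta(1 - \tfrac1n + \tfrac{4(2d+3)\eta^2}{\beta})$, and $\tfrac{4(2d+3)\eta^2}{\beta} = \tfrac{4(2d+3)\eta^2}{8n(2d+3)\eta^2} = \tfrac{1}{2n}$, so $\beta'/\beta = 1 - \tfrac{1}{2n}$, which is $\le 1 - \tfrac{1}{4n(d+2)}$ since $d \ge 1$. For $\alpha'$: $\tfrac{2(2d+3)\eta^2}{\alpha} = \tfrac{1}{4n(d+2)}$, so $\alpha'/\alpha = 1 - \tfrac{1}{2n(d+2)} + \tfrac{1}{4n(d+2)} = 1 - \tfrac{1}{4n(d+2)}$, matching the second term in the max exactly. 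For $1-\tfrac{\mu\eta}{2}$ with $\eta = \tfrac{1}{(40d+63)L}$: $\tfrac{\mu\eta}{2} = \tfrac{1}{2\kappa(40d+63)} = \tfrac{1}{\kappa(80d+126)}$, which matches the first term in the max exactly. So all three coefficients are bounded by $\rho$ by construction once $\gamma \ge 0$ is checked.

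The arithmetic core is verifying $\gamma = 2\eta - 4(d+3)L\eta^2 - \tfrac{8L}{3n(d+2)}\alpha - \tfrac{2L}{n}\beta \ge 0$. Substituting the $\alpha,\beta$ values, $\tfrac{8L}{3n(d+2)}\alpha = \tfrac{8L}{3}\cdot 8(2d+3)\eta^2 = \tfrac{64}{3}(2d+3)L\eta^2$ and $\tfrac{2L}{n}\beta = 16(2d+3)L\eta^2$, so $\gamma = 2\eta - \big(4(d+3) + \tfrac{64}{3}(2d+3) + 16(2d+3)\big)L\eta^2 = 2\eta - \big(4(d+3) + \tfrac{112}{3}(2d+3)\big)L\eta^2$. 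Dividing by $\eta>0$, it suffices that $2 \ge \big(4(d+3) + \tfrac{112}{3}(2d+3)\big)L\eta = \tfrac{4(d+3) + \tfrac{112}{3}(2d+3)}{40d+63}$, i.e. $12(d+3) + 112(2d+3) \le 3(80d+63) \cdot 2$; I would check $12d + 36 + 224d + 336 = 236d + 372 \le 240d + 378$, which holds for all $d \ge 1$ (indeed $372 \le 378$ and $236 \le 240$). Hence $\gamma \ge 0$ and the $-\gamma C$ term is dropped.

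Finally I identify the additive constant: $\Delta = \tfrac{2(d+3)^3L^2v^2\eta}{\mu} + 4L^2v^2(d+6)^3\eta^2 + \tfrac{5(d+6)^3L^2v^2}{2n(d+2)}\alpha$. Using $\tfrac{L^2\eta}{\mu} = \tfrac{L\kappa\eta}{1}\cdot\tfrac1L \cdot \mu^{-1}\mu = \tfrac{\kappa}{40d+63}$ for the first term, $L^2\eta^2 = \tfrac{1}{(40d+63)^2}$ for the second, and $\tfrac{5(d+6)^3L^2v^2}{2n(d+2)}\cdot 8n(d+2)(2d+3)\eta^2 = 20(d+6)^3(2d+3)L^2v^2\eta^2 = \tfrac{20(2d+3)(d+6)^3v^2}{(40d+63)^2}$ for the third, one collects terms: the first gives $\tfrac{2(d+3)^3 v^2 \kappa}{40d+63}$, and the second and third combine to $\tfrac{(4 + 20(2d+3))(d+6)^3 v^2}{(40d+63)^2} = \tfrac{(40d+64)(d+6)^3 v^2}{(40d+63)^2} = \tfrac{8(5d+8)(d+6)^3 v^2}{(40d+63)^2}$, which is exactly $\delta$. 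The main obstacle is purely the bookkeeping: keeping the many dimension-dependent constants straight so that the $\gamma \ge 0$ inequality is tight enough to go through for all $d\ge1$ and so that the residual constant collapses cleanly to the stated $\delta$; there is no conceptual difficulty beyond assembling the five preceding lemmas/corollary.
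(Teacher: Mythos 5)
Your proposal is correct and follows essentially the same route as the paper's proof: substitute the stated $\eta,\alpha,\beta$ into \cref{coro:psi_recursion}, verify $\gamma\ge 0$ so the $-\gamma C$ term can be dropped, note that the three contraction coefficients come out to exactly $1-\tfrac{1}{\kappa(80d+126)}$, $1-\tfrac{1}{4n(d+2)}$, and $1-\tfrac{1}{2n}\le 1-\tfrac{1}{4n(d+2)}$, and collapse $\Delta$ to the stated $\delta$. The only blemish is a typo in your intermediate inequality (``$3(80d+63)\cdot 2$'' should read $6(40d+63)=240d+378$), which does not affect the final, correct check $236d+372\le 240d+378$.
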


\begin{table}[t]
\caption{Summary of data sets and regularization coefficients ($\lambda_1$ and $\lambda_2$) used in our experiments.}
\label{tab:dataset}
\vskip 0.1in
\begin{center}
\begin{tabular}{lccccc}
\toprule
Data Set & n      & d    & $\lambda_1$ & $\lambda_2$ & $v$\\ \midrule
a9a      & 32561  & 123  &      $10^{-4}$       &       $10^{-4}$       & $10^{-3}$\\
w8a      & 49749  & 300  &      $10^{-5}$       &       $10^{-4}$       & $10^{-3}$\\
covtype  & 581012 & 54   &      $10^{-4}$       &       $10^{-5}$       & $10^{-3}$\\
gisette  & 6000   & 5000 &      $10^{-4}$       &       $10^{-4}$       & $10^{-3}$\\ \bottomrule
\end{tabular}
\end{center}
\vskip -0.1in
\end{table}

According to \cref{thm:psi_convergence}, we can telescope the inequality \eqref{eq:psi_convergence} from $k=0$ for $K-1$, and then find the SZO query complexity and the value of the smooth constant $v$ to get the $\epsilon$ accuracy solution.

\begin{corollary}
    \label{coro:convergence_property}
    Let $p=\frac{1}{n}$, $\alpha = 8n(d+2)(2d+3)\eta^2$, $\beta = 8n(2d+3)\eta^2$, and $\eta = \frac{1}{(40d+63)L}$. Given the $L$ smooth and $u$-strongly convex function $f_i$, $i\in [n]$, and the convex function $\psi(x)$, we have
    \begin{align}
    \label{eq:convergence_property}
        \EB[\Psi(x_K)] \leq (1-\theta)^K \Psi(x_0) + \sigma,
    \end{align}
    where $\theta = \frac{1}{\kappa(80d+126) + 4n(d+2)}$, and $\sigma = [(80d+126) + 4n(d+2)]\kappa(\kappa + 1)(d+6)^2v^2$.
\end{corollary}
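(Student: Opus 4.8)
The plan is to turn the one‑step contraction of \cref{thm:psi_convergence} into a $K$‑step bound by unrolling a geometric recursion, and then to simplify the resulting constants into the stated form of $\sigma$. Write $\rho := \max\{1-\tfrac{1}{\kappa(80d+126)},\,1-\tfrac{1}{4n(d+2)}\}$, so that \cref{thm:psi_convergence} reads $\EB[\Psi(x_{k+1})]\le \rho\,\Psi(x_k)+\delta$ (understood conditionally on the history up to step $k$). Taking total expectations and iterating from $k=0$ to $K-1$ yields
$\EB[\Psi(x_K)]\le \rho^K\Psi(x_0)+\delta\sum_{j=0}^{K-1}\rho^j\le \rho^K\Psi(x_0)+\tfrac{\delta}{1-\rho}$, where the last step uses $\rho<1$ to sum the geometric series.

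The elementary fact driving the simplification is $\rho\le 1-\theta$ with $\theta=\tfrac{1}{\kappa(80d+126)+4n(d+2)}$: indeed, for positive $a,b$ one has $\max\{1-\tfrac1a,1-\tfrac1b\}=1-\tfrac{1}{\max\{a,b\}}\le 1-\tfrac{1}{a+b}$, and here $a+b=\kappa(80d+126)+4n(d+2)=\theta^{-1}$. Consequently $\rho^K\le(1-\theta)^K$ and $1-\rho\ge\theta$, so $\tfrac{\delta}{1-\rho}\le \tfrac{\delta}{\theta}=\delta\big(\kappa(80d+126)+4n(d+2)\big)$, which already gives $\EB[\Psi(x_K)]\le(1-\theta)^K\Psi(x_0)+\delta\big(\kappa(80d+126)+4n(d+2)\big)$.

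It then remains to check $\delta\big(\kappa(80d+126)+4n(d+2)\big)\le\sigma$. First I would bound $\delta\le(\kappa+1)(d+6)^2v^2$ termwise: for the first summand of $\delta$, $\tfrac{2(d+3)^3\kappa v^2}{40d+63}\le\tfrac{2(d+6)^3\kappa v^2}{40d+63}\le\kappa(d+6)^2v^2$ since $2(d+6)\le 40d+63$; for the second summand, $\tfrac{8(5d+8)(d+6)^3v^2}{(40d+63)^2}\le(d+6)^2v^2$ since $8(5d+8)(d+6)=40d^2+304d+384\le 1600d^2+5040d+3969=(40d+63)^2$. Second, using $\kappa\ge1$ (because $\mu\le L$), $\kappa(80d+126)+4n(d+2)\le\kappa\big((80d+126)+4n(d+2)\big)$. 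Combining these two bounds, $\delta\big(\kappa(80d+126)+4n(d+2)\big)\le(\kappa+1)(d+6)^2v^2\cdot\kappa\big((80d+126)+4n(d+2)\big)=\sigma$, which completes the proof.

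The only nontrivial part is the last paragraph — verifying the polynomial‑in‑$d$ inequalities that let $\delta/\theta$ be absorbed into the claimed $\sigma$; the rest is the routine unrolling of a contractive recursion and the bound $\max\{1-\tfrac1a,1-\tfrac1b\}\le 1-\tfrac1{a+b}$.
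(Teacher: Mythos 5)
Your proposal is correct and follows essentially the same route as the paper: invoke \cref{thm:psi_convergence}, replace the max contraction factor by $1-\theta$ via $\max\{1-\tfrac1a,1-\tfrac1b\}\le 1-\tfrac{1}{a+b}$, telescope, bound the geometric sum by $\delta/\theta$, and absorb the constants into $\sigma$. Your final paragraph is in fact more careful than the paper's one-line justification (which states $\delta\le(\kappa+1)d^2v^2$, an apparent typo for $(d+6)^2$): your termwise verification of $\delta\le(\kappa+1)(d+6)^2v^2$ and the use of $\kappa\ge 1$ are exactly the details needed to reach the stated $\sigma$.
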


From \cref{coro:convergence_property}, in order to get $\epsilon$ accuracy for the Lyapunov function as well as $F$, the smooth constant $v$ need to sufficiently small such that $v =\OM({\sqrt{\frac{\epsilon}{\kappa^2 d^3 n}}})$. We can also derive the $\OM(d(n+\kappa)\log(\frac{1}{\epsilon}))$ SZO query complexity of \texttt{ZPDVR}, which aligns with the best-known results of SVRG methods.
The optimal SZO query complexity demonstrates the efficiency of \texttt{ZPDVR}.

\section{Experiment}
\label{sec:experiment}

\begin{figure*}[t]
\centering
\vskip 0.15in
\subfigure[a9a]{
    \includegraphics[width=0.8\columnwidth]{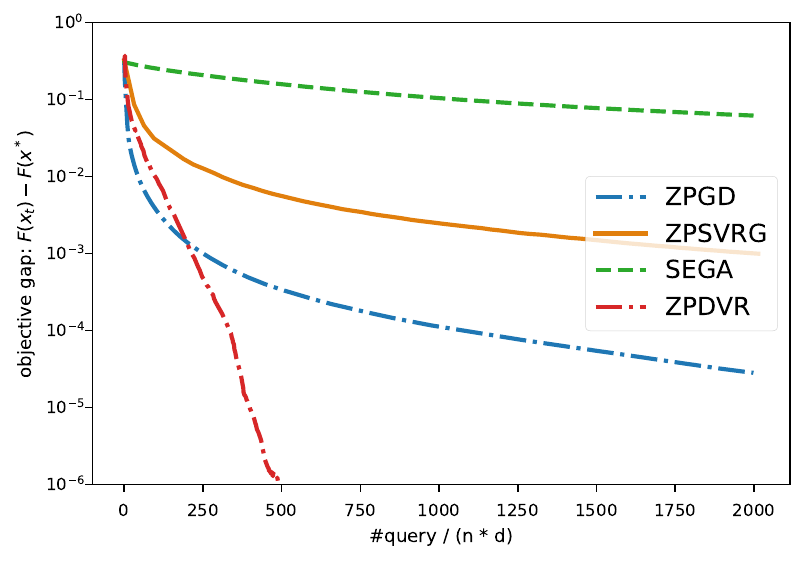}
    \label{fig:a9a}
}
\subfigure[w8a]{
    \includegraphics[width=0.8\columnwidth]{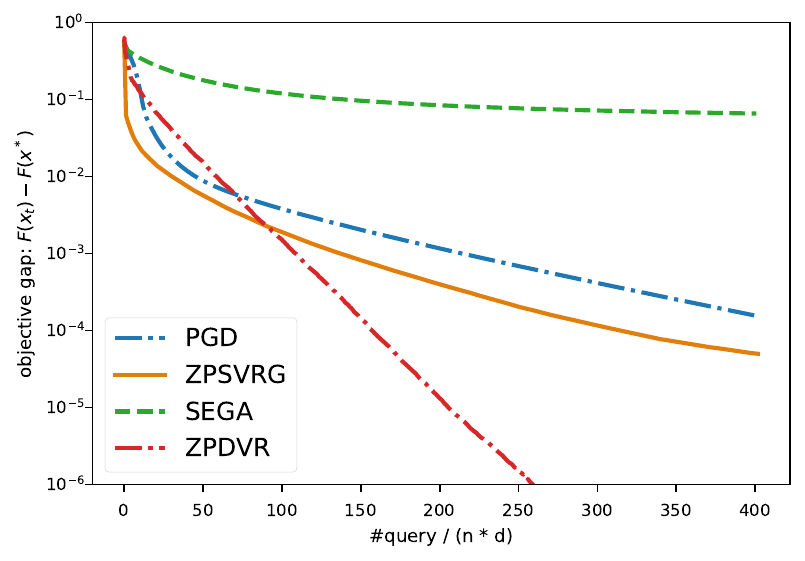}
    \label{fig:w8a}
}
\subfigure[covtype]{
    \includegraphics[width=0.8\columnwidth]{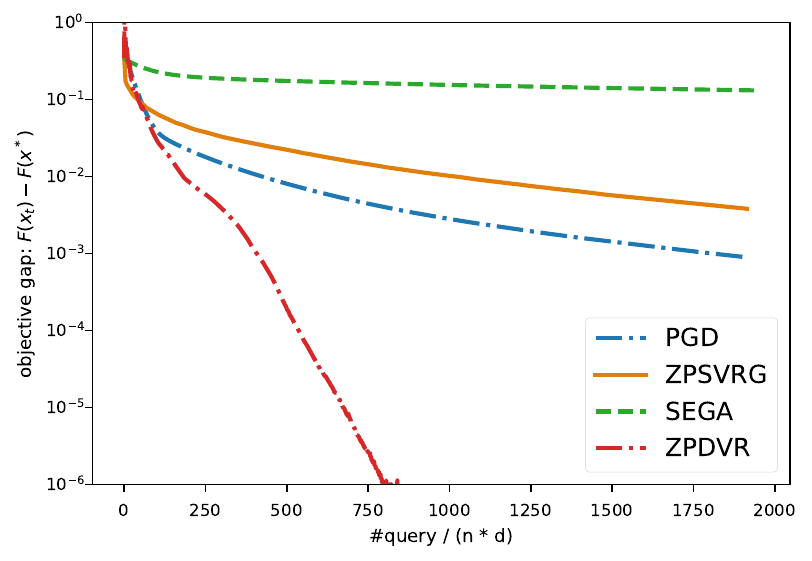}
    \label{fig:covtype}
}
\subfigure[gisette]{
    \includegraphics[width=0.8\columnwidth]{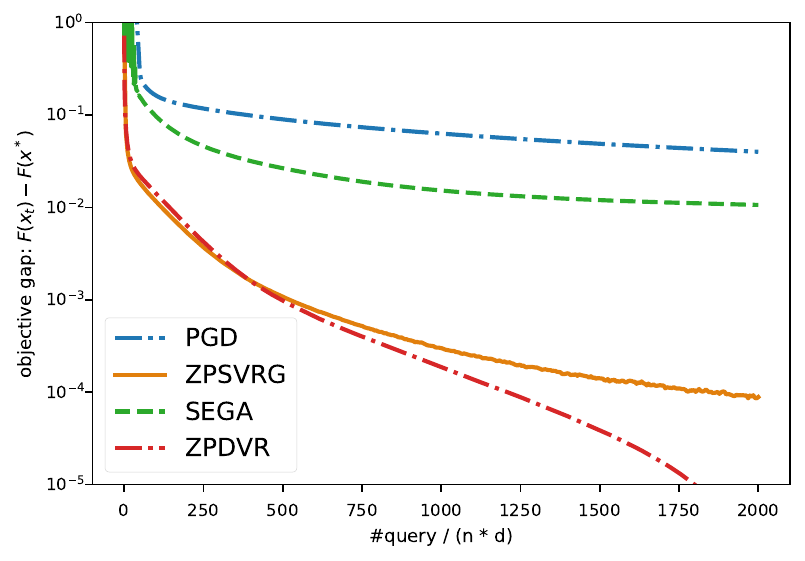}
    \label{fig:gisette}
}
\caption{Comparison of different zeroth-order methods for the loss residual $F(x) - F(x^*)$ versus the number of SZO. The $y$ axis is on a logarithmic scale and the $x$ label is the number of SZO divided by $n*d$.}
\label{fig:experiment_result}
\vskip -0.1in
\end{figure*}


In this section, we conduct several numerical experiments to demonstrate the convergence propriety of the proposed \texttt{ZPDVR} and compare its performance against other related FO and ZO methods.
Here, we focus on a binary logistic classification problem. 
Given the data set $\DM=\{(z_i, y_i)\}_{i=1}^n$, where $z_i\in \RB^d$ is the feature vector and $y_i\in \{-1, 1\}$ is the label, the strongly convex objective function $F(x)$ is the regularized cross entropy loss, i.e.,
\begin{align*}
    \min_{x\in \dom} \frac{1}{n}\sum_{i=1}^n \log(1+\exp(-y_ix^\top z_i) + \frac{\lambda_2}{2}\norm{x} + \lambda_1\Vert x\Vert_1,
\end{align*}
with  $f_i(x) = \log(1+\exp(-y_i x^\top z_i)) + \frac{\lambda_2}{2}\norm{x}$ and $\psi(x) = \lambda \Vert x\Vert_1$, for $i\in [n]$.

Specifically, we choose three related FO and ZO methods as baselines: Proximal Gradient Descent (\texttt{PGD}), \texttt{ZPSVRG}, and \texttt{SEGA}. 
In our experiment, \texttt{ZPSVRG} carries out the ZO version of proximal variance reduction method \citep{xiao2014proximal} with the random gradient estimate defined in Eq.\eqref{eq:directional_derivative}, which has the non-vanishing coordinate-wise variance.


In \cref{tab:dataset}, we present the regularization coefficients and the smooth constant used in four binary classification data sets from LIBSVM website\footnote{\url{https://www.csie.ntu.edu.tw/~cjlin/libsvmtools/datasets/binary.html}}.
To enhance implementation efficiency, \texttt{ZPSVRG}, \texttt{ZPDVR}, and \texttt{SEGA} are executed in the batch version. 
The grid search is performed to identify optimal hyperparameters for each method.
The detailed information of the hyperparameter tuning procedure is provided in \cref{sec:hyperparameter}. 

As shown in \cref{fig:experiment_result}, we compare the objective gap $F(x) - F(x^*)$ of \texttt{ZPDVR} with other methods in terms of SZO query complexity. 
Across all data sets, \texttt{ZPDVR} achieves the best performance and presents the linear convergence rate as demonstrated in \cref{thm:psi_convergence}. 
Besides, \texttt{PGD} and \texttt{SEGA} also obtain the linear convergence rate, albeit at a slower pace than \texttt{ZPDVR}. 
While due to the non-vanishing coordinate-wise variance in \texttt{ZPSVRG}, we observe that the function value of \texttt{ZPSVRG} can not decrease to the specified precision.
For instance, in \cref{fig:gisette}, \texttt{ZPSVRG} stagnates in the neighborhood of $F(x^*)$ with the radius $10^{-4}$.
These results emphasize the linear convergence rate and double variance reduction properties of \texttt{ZPDVR} analyzed in \cref{sec:convergence} for the composite problem \eqref{eq:objective_function}.

\section{Conclusion}
\label{sec:conclusion}

This paper introduces a novel ZO variance reduction method, \texttt{ZPDVR}, designed to diminish both the inherent sampling variance and the coordinate-wise variance in composite optimization problems. 
Unlike prior method, \texttt{ZPDVR} obviates the need for $\Omega(d)$ function evaluations to approximate FO information. 
It instead only requires $\OM(1)$ SZO query in expectation per iteration, which enhances the computational efficiency, particularly for high-dimensional problems. 
Additionally, we establish the linear convergence of \texttt{ZPDVR} and show that its SZO query complexity is $\OM(d(n+\kappa)\log(\frac{1}{\epsilon}))$, paralleling the optimal results achieved by SVRG methods. 
Moreover, \texttt{ZPDVR} is adaptable to constrained optimization scenarios, including black-box optimization problems. 
Empirical results validate both the convergence properties and the superior performance of \texttt{ZPDVR}.

\section*{Acknowledge}
This work was supported by the National Natural Science Foundation of China under Grant 12101491,
the National Natural Science Foundation for Outstanding Young Scholars of China under Grant 72122018,
the MOE Project of Key Research Institute of Humanities and Social Sciences No. 22JJD110001,
and A*star Centre for Frontier AI Research.

\section*{Impact Statement}

This paper presents work whose goal is to advance the field of Machine Learning. 
There are many potential societal consequences of our work, none which we feel must be specifically highlighted here.


\bibliography{reference}
\bibliographystyle{icml2024}

\newpage
\appendix
\onecolumn

\section{Some useful Lemmas}
\label{sec:technique lemma}
\begin{lemma}[Cauchy–Schwarz inequality]
\label{lemma:proj_sum_lemma}
For all vectors $x_1, \dots, x_n \in \RB^d$, we have
    \begin{align}
    \label{eq:proj_sum_lemma}
        \norm{\sum_{i=1}^n x_i} \leq n\sum_{i=1}^n \norm{x_i}.
    \end{align}
\end{lemma}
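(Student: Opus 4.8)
The plan is to recognize that, under the paper's convention in which the macro $\norm{\cdot}$ already denotes the \emph{squared} Euclidean norm $\Vert\cdot\Vert^2$, the claimed inequality reads $\norm{\sum_{i=1}^n x_i} \le n\sum_{i=1}^n \norm{x_i}$, i.e. $\Vert\sum_{i=1}^n x_i\Vert^2 \le n\sum_{i=1}^n\Vert x_i\Vert^2$. This is the standard quadratic form of the Cauchy–Schwarz inequality (equivalently, a statement of convexity of the squared norm), so the goal is simply to record a short self-contained derivation. I would offer one of two equivalent routes.

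The first route proceeds by direct expansion and a pairwise bound. I would write $\norm{\sum_{i=1}^n x_i} = \sum_{i=1}^n\sum_{j=1}^n \dotprod{x_i, x_j}$, then apply the elementary inequality $\dotprod{x_i, x_j} \le \tfrac12(\norm{x_i} + \norm{x_j})$, which follows immediately from expanding $0 \le \norm{x_i - x_j}$. Summing this bound over all $n^2$ ordered pairs $(i,j)$ and observing that each index $i$ contributes its term $\norm{x_i}$ to exactly $n$ of those pairs collapses the double sum to $n\sum_{i=1}^n \norm{x_i}$, which is precisely the claim.

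Alternatively, the second route first applies the triangle inequality $\Vert \sum_{i=1}^n x_i\Vert \le \sum_{i=1}^n \Vert x_i\Vert$, squares both sides, and then invokes the scalar Cauchy–Schwarz inequality on the vectors $(1,\dots,1)$ and $(\Vert x_1\Vert,\dots,\Vert x_n\Vert)$ in $\RB^n$ to obtain $\big(\sum_{i=1}^n \Vert x_i\Vert\big)^2 \le n\sum_{i=1}^n \norm{x_i}$. Chaining the two steps yields the result.

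There is no genuine obstacle here, since the statement is a textbook inequality and either derivation is only a few lines; the lemma is included purely as a technical tool for the main analysis. The single point requiring care is the squared-norm convention built into the $\norm{\cdot}$ macro, which is exactly what makes the constant $n$ (rather than $\sqrt{n}$ or $1$) the correct and tight factor, with equality attained when all the $x_i$ coincide.
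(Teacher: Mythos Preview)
Your proposal is correct: you have correctly identified the paper's convention that $\norm{\cdot}$ denotes the squared Euclidean norm, so the inequality is indeed $\Vert\sum_i x_i\Vert^2 \le n\sum_i \Vert x_i\Vert^2$, and both of your short derivations are valid and tight.

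The paper itself provides no proof for this lemma; it is simply listed in the appendix among the technical tools and treated as a standard fact. So your write-up goes beyond what the paper does, and either of your two routes would be an acceptable self-contained justification.
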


\begin{lemma}[Young's inequality]
    \label{lemma:proj_Young}
    For any two vectors $x, y \in \RB^d$, we have
    \begin{align}
        \dotprod{x, y} \leq \frac{a\norm{x}}{2} + \frac{\norm{y}}{2a},\label{eq:proj_young}
    \end{align}
    where $a$ is a positive real number.
\end{lemma}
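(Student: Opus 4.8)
The plan is to derive Young's inequality directly from the non-negativity of a squared norm, using only the bilinearity and symmetry of the inner product. The single structural fact required is that $\norm{z} \geq 0$ for every vector $z \in \RB^d$, together with the hypothesis $a > 0$, which guarantees that $\sqrt{a}$ and $1/\sqrt{a}$ are well-defined positive reals.

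First I would introduce the auxiliary vector $z = \sqrt{a}\, x - \tfrac{1}{\sqrt{a}}\, y$ and observe that its squared norm is non-negative. Expanding this squared norm via bilinearity yields the identity
\begin{align*}
0 \leq \norm{\sqrt{a}\, x - \tfrac{1}{\sqrt{a}}\, y} = a\norm{x} - 2\dotprod{x, y} + \tfrac{1}{a}\norm{y}.
\end{align*}
Rearranging to isolate the cross term gives $2\dotprod{x, y} \leq a\norm{x} + \tfrac{1}{a}\norm{y}$, and dividing both sides by $2$ produces exactly the claimed bound $\dotprod{x, y} \leq \tfrac{a\norm{x}}{2} + \tfrac{\norm{y}}{2a}$.

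There is no genuine obstacle here: the statement is an elementary consequence of a single square-completion step. The only point meriting care is the role of the hypothesis $a > 0$, which is needed both for $\sqrt{a}$ and $1/\sqrt{a}$ to exist and for the division by $2a$ in the final inequality to preserve its direction. An equivalent derivation, should one prefer to avoid square roots, combines the scalar arithmetic--geometric mean inequality $2st \leq a s^2 + a^{-1} t^2$ with the Cauchy--Schwarz bound $\dotprod{x, y} \leq \Vert x\Vert \cdot \Vert y\Vert$; however, the square-completion argument above is the most direct and self-contained route.
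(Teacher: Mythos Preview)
Your proof is correct and entirely standard: the square-completion identity $0 \le \norm{\sqrt{a}\,x - \tfrac{1}{\sqrt{a}}\,y}$ immediately yields the claim, and you have correctly noted where the hypothesis $a>0$ is used. The paper itself does not supply a proof of this lemma; it is listed among the preliminary ``useful lemmas'' and invoked by reference only, so there is nothing to compare against beyond confirming that your argument is sound.
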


\begin{lemma}[Theorem 2.1.5 of \citet{nesterov2018lectures}]
    \label{lemma:proj_smooth}
    For a $L$-smooth function $f$ defined in the domain $\XM$, $\forall x, y \in \XM$, we have
    \begin{align}
        f(x) \geq f(y) + \dotprod{\nabla f(y), x - y} + \frac{1}{2L}\norm{\nabla f(x) - \nabla f(y)}.
        \label{eq:proj_smooth}
    \end{align}
\end{lemma}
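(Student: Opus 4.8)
The plan is to reduce the claim to the elementary descent consequence of $L$-smoothness by shifting $f$ with a linear functional so that $y$ becomes a global minimizer. For fixed $y$, I would introduce the auxiliary function $\phi(z) = f(z) - \dotprod{\nabla f(y), z}$. Since the $f_i$ (and hence $f$) are convex throughout this paper and $f$ is $L$-smooth, $\phi$ inherits both properties: adding a linear term preserves convexity, and since $\nabla\phi(z) = \nabla f(z) - \nabla f(y)$ differs from $\nabla f$ only by the constant $\nabla f(y)$, the quadratic upper bound defining $L$-smoothness (as stated in the notation section) carries over to $\phi$ verbatim.

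The key observation is that $\nabla\phi(y) = 0$, so by convexity $y$ is the global minimizer of $\phi$. I would then apply the $L$-smooth quadratic majorant to $\phi$ at the shifted point $x - \frac{1}{L}\nabla\phi(x)$. Expanding the majorant gives $\phi\left(x - \frac{1}{L}\nabla\phi(x)\right) \leq \phi(x) - \frac{1}{2L}\norm{\nabla\phi(x)}$, where the coefficient $-\frac{1}{2L}$ arises from combining the inner-product contribution $-\frac{1}{L}\norm{\nabla\phi(x)}$ with the quadratic contribution $+\frac{1}{2L}\norm{\nabla\phi(x)}$.

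Chaining this with the minimizer inequality $\phi(y) \leq \phi\left(x - \frac{1}{L}\nabla\phi(x)\right)$ yields $\phi(y) \leq \phi(x) - \frac{1}{2L}\norm{\nabla\phi(x)}$. The remaining step is purely algebraic: substituting $\phi(y) = f(y) - \dotprod{\nabla f(y), y}$, $\phi(x) = f(x) - \dotprod{\nabla f(y), x}$, and $\nabla\phi(x) = \nabla f(x) - \nabla f(y)$, and then rearranging, reproduces \eqref{eq:proj_smooth} exactly.

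The main obstacle---indeed essentially the only subtle point---is recognizing that the result genuinely requires convexity of $f$, not merely $L$-smoothness: the smooth majorant alone cannot produce a lower bound on $f(x)$ featuring $\norm{\nabla f(x) - \nabla f(y)}$, as a concave $L$-smooth quadratic shows. Convexity enters precisely at the step asserting that $y$ minimizes $\phi$. Since the paper's $f_i$ are $\mu$-strongly convex, convexity is guaranteed, and I would state this hypothesis explicitly at the start of the argument.
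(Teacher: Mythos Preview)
Your argument is correct and is in fact the standard proof of this inequality (exactly the one given in the cited reference, Theorem~2.1.5 of Nesterov's \emph{Lectures on Convex Optimization}): shift by a linear functional so that $y$ becomes a minimizer of $\phi$, apply the descent lemma to $\phi$ at the point $x-\tfrac{1}{L}\nabla\phi(x)$, and unwind. Your remark that convexity is essential, and that it is implicitly supplied here by the standing assumption that each $f_i$ is $\mu$-strongly convex, is well taken.

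There is nothing to compare against, however: the paper does not prove this lemma at all. It is listed in Appendix~A among ``Some useful Lemmas'' with a citation to Nesterov's book and is used as a black box in the proofs of Lemmas~\ref{lemma:recursion_of_2}, \ref{lemma:recursion_of_3}, and \ref{lemma:gf_relationship}. So your proposal supplies a proof where the paper simply invokes the literature.
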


\begin{lemma}
    \label{lemma:proj_gaussian_lemma}
    Let $A$ and $B$ be two symmetric matrices. Random vector $u$ has the Gaussian distribution, i.e., $u\sim \NM(0, I_d)$. Then, we have
    \begin{align}
    \label{eq:proj_gaussian_lemma}
        \EB_u[u^\top A u\cdot u^\top B u] = (\tr A) (\tr B) + 2 (\tr AB).
    \end{align}
\end{lemma}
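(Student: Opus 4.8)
The plan is to prove the Gaussian fourth-moment identity of \cref{lemma:proj_gaussian_lemma} by expanding both quadratic forms into sums over coordinates and computing the resulting fourth-order mixed moments of a standard Gaussian vector. Writing $u = (u_1, \dots, u_d)^\top$ with the $u_i$ independent $\NM(0,1)$, I would start from
\begin{align*}
u^\top A u \cdot u^\top B u = \sum_{i,j,k,l} A_{ij} B_{kl}\, u_i u_j u_k u_l,
\end{align*}
so that taking the expectation reduces the problem to evaluating $\EB[u_i u_j u_k u_l]$ for all index quadruples and then resumming against the coefficients $A_{ij}B_{kl}$.

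The key computational step is the classical fact that for independent standard Gaussians the fourth mixed moment $\EB[u_i u_j u_k u_l]$ is nonzero only when the four indices pair up. Concretely, $\EB[u_i^4] = 3$, $\EB[u_i^2 u_k^2] = 1$ for $i \neq k$, and any moment with an index of odd multiplicity vanishes by symmetry. These cases can be packaged via the Isserlis/Wick formula as
\begin{align*}
\EB[u_i u_j u_k u_l] = \delta_{ij}\delta_{kl} + \delta_{ik}\delta_{jl} + \delta_{il}\delta_{jk},
\end{align*}
where $\delta$ is the Kronecker delta; one checks this reproduces the $3$ and $1$ values above and the vanishing odd cases. Substituting this into the expanded sum splits $\EB[u^\top A u \cdot u^\top B u]$ into exactly three sums, one per pairing term.

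The final step is to recognize each of the three resulting sums as a trace. The $\delta_{ij}\delta_{kl}$ term collapses to $(\sum_i A_{ii})(\sum_k B_{kk}) = (\tr A)(\tr B)$, while the $\delta_{ik}\delta_{jl}$ and $\delta_{il}\delta_{jk}$ terms each become $\sum_{i,j} A_{ij} B_{ij}$; using the symmetry of $A$ and $B$ (so that $B_{ij} = B_{ji}$) each of these equals $\sum_{i,j} A_{ij} B_{ji} = \tr(AB)$, contributing $2\tr(AB)$ in total. Combining the three pieces yields $(\tr A)(\tr B) + 2\tr(AB)$, as claimed.

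I do not expect a genuine obstacle here, since this is a standard Gaussian moment computation; the only point requiring care is the use of symmetry of $A$ and $B$ to merge the two cross-pairing sums into $2\tr(AB)$ rather than $\tr(AB) + \tr(AB^\top)$, and the routine bookkeeping of which index pairings survive. An alternative, essentially equivalent route would be to diagonalize via a rotation $u \mapsto Qu$ reducing one matrix to diagonal form, but the direct index expansion is cleaner and needs no change-of-variables justification for the Gaussian measure.
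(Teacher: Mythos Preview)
Your argument is correct and is the standard proof of this Gaussian fourth-moment identity via Isserlis' theorem. Note, however, that the paper does not actually supply a proof of \cref{lemma:proj_gaussian_lemma}: it is listed among the preliminary technical lemmas in the appendix and stated without proof, so there is no authors' argument to compare against. Your write-up would in fact fill that gap; the only minor remark is that the symmetry hypothesis is used exactly where you flag it, to turn $\sum_{i,j} A_{ij} B_{ij}$ into $\tr(AB)$.
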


\begin{corollary}
    \label{coro:proj_gaussian_lemma2}
    Let $v$ be a any vector in $\RB^d$. For the random vector $u$ with the Gaussian distribution, i.e., $u\sim \NM(0, I_d)$, we have
    \begin{align}
    \label{eq:proj_gaussian_lemma2}
        \EB_u[\norm{u u^\top v}] = (d+2)\norm{v}.
    \end{align}
\end{corollary}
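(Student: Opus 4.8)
The plan is to reduce the rank-one action $uu^\top v$ to a scalar multiple of $u$, recast the resulting squared norm as a product of two Gaussian quadratic forms, and then invoke \cref{lemma:proj_gaussian_lemma}.

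First I would observe that $u^\top v$ is a scalar, so $uu^\top v = (u^\top v)\,u$, which immediately gives the pointwise identity $\norm{uu^\top v} = (u^\top v)^2 \Vert u\Vert^2$. This converts the target expectation into $\EB_u[(u^\top v)^2 \Vert u\Vert^2]$, i.e. the expectation of a product of two quadratic forms in the Gaussian vector $u$.

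Next I would write each factor as a quadratic form tailored to \cref{lemma:proj_gaussian_lemma}: since $(u^\top v)^2 = u^\top (vv^\top) u$ and $\Vert u\Vert^2 = u^\top I_d u$, I can apply that lemma with the symmetric matrices $A = vv^\top$ and $B = I_d$. This yields $\EB_u[\norm{uu^\top v}] = (\tr vv^\top)(\tr I_d) + 2\tr(vv^\top I_d)$.

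Finally I would evaluate the three traces, using $\tr(vv^\top) = \norm{v}$, $\tr I_d = d$, and $\tr(vv^\top I_d) = \tr(vv^\top) = \norm{v}$, which collapses the right-hand side to $d\,\norm{v} + 2\,\norm{v} = (d+2)\norm{v}$, as claimed. There is essentially no obstacle here; the only points worth checking are that $vv^\top$ is symmetric, so that \cref{lemma:proj_gaussian_lemma} is applicable, and that the rank-one reduction $uu^\top v = (u^\top v)u$ is what turns the matrix norm into a clean product of scalar quadratic forms.
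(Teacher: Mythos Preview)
Your proof is correct and follows essentially the same approach as the paper: both reduce $\norm{uu^\top v}$ to the product $u^\top (vv^\top) u \cdot u^\top I_d u$ and apply \cref{lemma:proj_gaussian_lemma} with $A = vv^\top$, $B = I_d$. The only cosmetic difference is that the paper reaches this product via the trace cyclic identity $\norm{uu^\top v} = \tr(v^\top uu^\top uu^\top v) = \tr(u^\top u \cdot u^\top vv^\top u)$, whereas you get there more directly by noting $uu^\top v = (u^\top v)u$.
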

\begin{proof}
\begin{align*}
    \EB_u [\norm{u u^\top v}] &= \EB_u [\tr(v^\top u u^\top uu^\top v)]\\
    &= \EB_u [\tr(u^\top u u^\top vv^\top u)]\\
    &= \EB_u [u^\top I u \cdot u^\top vv^\top u]\\
    &\leftstackrel{\eqref{eq:proj_gaussian_lemma}}{=} \tr(I)\tr( vv^\top) + 2 \tr (vv^\top)\\
    &= (d+2)\tr(vv^\top)\\
    &= (d+2)\norm{v}.
\end{align*}
\end{proof}
\begin{lemma}[Lemma 1 of \citet{nesterov2017random}]
\label{lemma:bound_random_norm}
Let the random vector $u\sim \mathcal{N}(0, I_{d})$. we have
\begin{align}
\label{eq:bound_random_norm}
    \begin{matrix}
        &\EB[\Vert u \Vert^q] \leq d^{\frac{q}{2}} \quad & q\in[0, 2],\\
        &d^{\frac{q}{2}}\leq \EB[\Vert u \Vert^q]\leq (q+d)^{\frac{q}{2}} \quad & q\geq 2.
    \end{matrix}
\end{align}
    
\end{lemma}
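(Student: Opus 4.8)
The plan is to reduce everything to the moments of the squared norm. Since $u\sim\NM(0,I_d)$, the quantity $\norm{u}=\sum_{i=1}^d u_i^2$ is a sum of $d$ independent squared standard normals, i.e. it has the chi-squared law with $d$ degrees of freedom, and in particular $\EB[\norm{u}]=\sum_{i=1}^d\EB[u_i^2]=d$. Writing $\Vert u\Vert^q=(\norm{u})^{q/2}$, the three claimed bounds separate according to whether the exponent $q/2$ makes the map $t\mapsto t^{q/2}$ concave or convex on $[0,\infty)$.

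For the regime $q\in[0,2]$ I would use Jensen's inequality: since $0\le q/2\le 1$ the map $t\mapsto t^{q/2}$ is concave, so
\begin{align*}
\EB[\Vert u\Vert^q]=\EB[(\norm{u})^{q/2}]\le (\EB[\norm{u}])^{q/2}=d^{q/2}.
\end{align*}
For $q\ge 2$ the same function is convex, and Jensen reverses to give the lower bound $\EB[\Vert u\Vert^q]\ge(\EB[\norm{u}])^{q/2}=d^{q/2}$. These two applications are routine and dispatch every claim except the upper bound in the regime $q\ge 2$.

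The upper bound $\EB[\Vert u\Vert^q]\le(q+d)^{q/2}$ for $q\ge 2$ is the main obstacle, because Jensen now points the wrong way. Here I would pass to the exact moment: via the chi-squared moment formula (equivalently, spherical coordinates) one obtains
\begin{align*}
\EB[\Vert u\Vert^q]=2^{q/2}\,\frac{\Gamma\!\left(\frac{d+q}{2}\right)}{\Gamma\!\left(\frac{d}{2}\right)}.
\end{align*}
Setting $a=d/2$ and $b=q/2\ge 1$, and cancelling the common factor $2^{q/2}=2^{b}$ against $(q+d)^{q/2}=2^{b}(a+b)^{b}$, the target reduces to the purely analytic inequality $\Gamma(a+b)/\Gamma(a)\le(a+b)^{b}$.

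To close this I would invoke the log-convexity of $\Gamma$: its logarithmic derivative $(\log\Gamma)'$ is increasing, so evaluating it at the right endpoint and using the elementary digamma bound $(\log\Gamma)'(x)\le\log x$ for $x>0$ gives
\begin{align*}
\log\frac{\Gamma(a+b)}{\Gamma(a)}=\int_a^{a+b}(\log\Gamma)'(t)\,\d t\le b\,(\log\Gamma)'(a+b)\le b\,\log(a+b).
\end{align*}
Exponentiating yields $\Gamma(a+b)/\Gamma(a)\le(a+b)^{b}$ and hence $\EB[\Vert u\Vert^q]\le(q+d)^{q/2}$. The only nonroutine ingredients are the exact moment evaluation and the monotonicity/$\log$-bound of the digamma function, both standard; everything else is Jensen applied to $\norm{u}$.
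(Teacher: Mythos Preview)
The paper does not supply its own proof of this lemma; it is stated in the auxiliary section and attributed to \citet{nesterov2017random} without argument. So there is nothing in the paper to compare against, and the relevant question is simply whether your argument is sound. It is.

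Your Jensen steps for $q\in[0,2]$ and for the lower bound when $q\ge 2$ are correct and standard. For the upper bound when $q\ge 2$, the exact moment identity $\EB[\Vert u\Vert^q]=2^{q/2}\,\Gamma((d+q)/2)/\Gamma(d/2)$ is the right starting point, and your reduction to $\Gamma(a+b)/\Gamma(a)\le(a+b)^b$ with $a=d/2$, $b=q/2$ is clean. The chain
\[
\log\frac{\Gamma(a+b)}{\Gamma(a)}=\int_a^{a+b}\psi(t)\,dt\le b\,\psi(a+b)\le b\,\log(a+b)
\]
is valid: the first inequality uses that the digamma $\psi=(\log\Gamma)'$ is increasing (log-convexity of $\Gamma$), and the second uses $\psi(x)\le\log x$, which follows immediately from convexity of $\log\Gamma$ via $\log x=\log\Gamma(x+1)-\log\Gamma(x)\ge\psi(x)$. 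Nesterov's original argument also passes through the explicit Gamma moment and bounds the ratio, so your route is in the same spirit; the digamma-monotonicity packaging is a perfectly acceptable way to close it.
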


\section{Missing Proofs}
\label{sec:missing proof}

\begin{proof}[\textbf{Proof of Lemma \ref{lemma:nearly_approximation}.}]
    For the $L$-smooth function $f_i$, we have the following Taylor expansion,
    \begin{equation}
        \begin{aligned}
        f_i(x+vu) &= f_i(x) + v\dotprod{\nabla f_i(x), u} + \frac{v^2}{2}u^\top \nabla^2 f_i(x') u,
    \end{aligned}
    \label{eq:appendix_f_i}
    \end{equation}
    where $x'\in (x, x + vu)$.
    Plugging it into Eq.\eqref{eq:stochastic_directional_derivative}, we have
    \begin{equation}
        \label{eq:appendix_expectation_f_i}
        \begin{aligned}
        \hat{\nabla} f_i(x, u) &= u\dotprod{u, \nabla f_i(x)} + \frac{v}{2}u^\top \nabla^2 f_i(x')u u\\
        &=uu^\top \nabla f_i(x) + \frac{Lv}{2}s_{i}(x, u)\norm{u}u,
    \end{aligned}
    \end{equation}
    where the last equality employs the fact that $0 \preceq \nabla^2 f_i(x') \preceq L$ for any accessible $x'$, and the function $s_i(x,u )$ is confined to the range $[0, 1]$.

    Taking the expectation w.r.t. $u$ for $\hat{\nabla} f_i(x)$, we have
    \begin{align*}
        \EB[\hat{\nabla } f_i(x, u)] &= \EB[uu^\top]\nabla f_i(x)+ \frac{Lv}{2}\EB[s_i(x, u)\norm{u}u]\\
        &= \nabla f_i(x) + \frac{Lv}{2}\EB[s_i(x, u)\norm{u}u].
    \end{align*}

    Since $\Big\Vert \EB[s_i(x, u)\norm{u}u]\Big\Vert \leq \EB\Big[\Big\Vert s_i(x, u) \norm{u} u\Big\Vert\Big] \leq \EB\Big[\Big\Vert \norm{u} u\Big\Vert\Big] = \EB[\Vert u\Vert^{3}]$, with Eq.\eqref{eq:bound_random_norm} $\EB[\Vert u\Vert^3] \leq (d+3)^{\frac{3}{2}}$, we then have $\Big\Vert \EB[s_i(x, u)\norm{u}u]\Big\Vert \leq (d+3)^{\frac{3}{2}}$. 

    For the expected norm, we have
    \begin{align*}
        &\quad\; \EB[\norm{\hat{\nabla} f_i(x, u) - \nabla f(x)}] \\
        &\stackrel{\eqref{eq:near_fiux}}{=} \EB[\norm{uu^\top \nabla f_i(x)+ \frac{Lv}{2}s_i(x, u)\norm{u}u - \nabla f(x)}] \\
        &= \EB\Big[\Big\Vert uu^\top \nabla f_i(x) - uu^\top \nabla f(x) + uu^\top \nabla f(x) - \nabla f(x)+ \frac{Lv}{2}s_i(x, u)\norm{u}u\Big\Vert^2\Big]\\
        &\stackrel{\eqref{eq:proj_sum_lemma}}{\leq} 3\EB[\norm{uu^\top(\nabla f_i(x) - \nabla f(x)}] + 3\EB[\norm{(uu^\top - I)\nabla f(x)}] + \frac{3L^2v^2}{4}\EB[\Vert u\Vert^6] \\
        &\stackrel{\eqref{eq:proj_gaussian_lemma2}}{\leq} 3(d+2)\norm{\nabla f_i(x) - \nabla f(x)} + 3(d+1) \norm{\nabla f(x)} + \frac{3L^2v^2}{4}\EB[\Vert u\Vert^6].
    \end{align*}
    Combining with \cref{lemma:bound_random_norm}, we conclude the result in \cref{lemma:nearly_approximation}.
\end{proof}

\begin{proof}[\textbf{Proof of \cref{coro:near_fux}.}]
According to \cref{lemma:nearly_approximation} and the definition of $\hat{\nabla} f(x, u)$, we have that 
\begin{align*}
    \hat{\nabla} f(x, u) = uu^\top \nabla f(x) + \frac{Lv}{2}s(x, u)\norm{u}u,
\end{align*}
where $s(x, u) = \frac{1}{n}\sum_{i=1}^n s_i(x, u)$.

Hence, taking the expectation w.r.t. $u$, we also derive 
    \begin{align*}
        \EB[\hat{\nabla } f(x, u)] &= \EB[uu^\top]\nabla f(x)+ \frac{Lv}{2}s(x, u)\EB[\norm{u}u]\\
        &= \nabla f_i(x) + \frac{Lv}{2}\EB[s(x, u)\norm{u}u],
    \end{align*}
where $\Big\Vert \EB[s(x, u)\norm{u}u] \Big\Vert \leq (d+3)^{\frac{3}{2}}$

For the expected norm, we have
    \begin{align*}
        &\quad\; \EB[\norm{\hat{\nabla} f(x, u) - \nabla f(x)}] \\
        &\stackrel{\eqref{eq:near_fux}}{=} \EB[\norm{uu^\top \nabla f(x)+ \frac{Lv}{2}s(x, u)\norm{u}u - \nabla f(x)}] \\
        &\stackrel{\eqref{eq:proj_sum_lemma}}{\leq} 2\EB[\norm{(uu^\top - I)\nabla f(x)}] + \frac{L^2v^2}{2}\EB[\Vert u\Vert^6] \\
        &\stackrel{\eqref{eq:proj_gaussian_lemma2}}{\leq} 2(d+1) \norm{\nabla f(x)} + \frac{L^2v^2}{2}\EB[\Vert u\Vert^6],
    \end{align*}
    which implies the result.
\end{proof}

\begin{proof}[\textbf{Proof of Lemma \ref{lemma:nearly_gk}.}]
From Eq.\eqref{eq:g_definition} and Eq.\eqref{eq:tdfw_definition}, we have
\begin{align*}
    &\quad\; \EB_{u, u_k, i}[g_k] \\
    &= \EB_{u, u_k, i}[\hat{\nabla} f_i(x_k, u_k) - \hat{\nabla} f_i(w_k, u_k) + \td{h}_k + \hat{\nabla} f(w_k, u) - uu^\top \td{h}_k]\\
    &\leftstackrel{\eqref{eq:near_fiux}, \eqref{eq:near_fux}}{=} \EB_{u, u_k, i}\Big[u_ku_k^\top \nabla f_i(x_k) + \frac{Lv}{2}s_i(x_k, u_k) \norm{u_k}u_k - u_ku_k^\top \nabla f_i(w_k) - \frac{Lv}{2}s_i(w_k, u_k)\norm{u_k}u_k + \\
        &\qquad \td{h}_k + uu^\top \nabla f(w_k) + \frac{Lv}{2}s(w_k, u)\norm{u}u - uu^\top \td{h}_k\Big]\\
    &= \nabla f(x_k) + \frac{Lv}{2}\EB_{u, u_k, i}\Big[(s_i(x_k, u_k) - s_i(w_k, u_k))\norm{u_k}u_k + s(w_k, u)\norm{u} u\Big].
\end{align*}

Let $\tau_{i, k} = \EB_{u, u_k, i}[(s_i(x_k, u_k) - s_i(w_k, u_k))\norm{u_k}u_k + s(w_k, u)\norm{u} u]$. Then, the upper bound for the norm of $\tau_{i, k}$ is 
\begin{align*}
    \Vert \tau_{i, k}\Vert &= \Big\Vert \EB_{u, u_k, i}[(s_i(x_k, u_k) - s_i(w_k, u_k))\norm{u_k}u_k + s(w_k, u)\norm{u} u]\Big\Vert\\
    &\leq \Big\Vert \EB[(s_i(x_k, u_k) - s_i(w_k, u_k))\norm{u_k}u_k]\Big\Vert + \Big\Vert \EB[s(w_k, u)\norm{u}u]\Big\Vert\\
    &\leq \EB\Big[\Big\Vert (s_i(x_k, u_k) - s_i(w_k, u_k))\norm{u_k}u_k\Big\Vert\Big] + \EB\Big[\Big\Vert s(w_k, u)\norm{u}u\Big\Vert\Big]\\
    &\leq \EB[\Vert u_k\Vert^3] + \EB[\Vert u\Vert^3].
\end{align*}
Finally, combining with Lemma \ref{lemma:bound_random_norm}, we can conclude the result.
\end{proof}

\begin{proof}[\textbf{Proof of \cref{lemma:recursion_of_1}}] According to the convexity of $\psi(x)$, the optimal point $x^*$ satisfies 
\begin{align*}
    x^* = \prox(x^* - \eta \nabla f(x^*)).
\end{align*}

Hence, based on Eq.\eqref{eq:update_x}, we have
\begin{align*}
    &\quad\; \EB[\norm{x_{k+1} - x^*}] \\
    &= \EB[\prox(x_k - \eta g_k) - \prox(x^* - \eta \nabla f(x^*))]\\
    &\leq \EB[\norm{x_k - x^* - \eta(g_k - \nabla f(x^*))}]\\
    &= \norm{x_k - x^*} - 2\eta \EB[\dotprod{x_k - x^*, g_k - \nabla f(x^*)}] + \eta^2\EB[\norm{g_k - \nabla f(x^*)}]\\
    &\stackrel{\eqref{eq:expectation_of_gk}}{=} \norm{x_k - x^*} - 2\eta\dotprod{x_k - x^*, \nabla f(x_k) - \nabla f(x^*)} - Lv\eta\dotprod{x_k-x^*, \tau_{i, k}} + \eta^2\EB[\norm{g_k - \nabla f(x^*)}]\\
    &\leq (1-\eta \mu)\norm{x_k - x^*} - 2\eta(f(x_k) - f(x^*) - \dotprod{\nabla f(x^*), x_k- x^*}) - Lv\eta\dotprod{x_k - x^*, \tau_{i, k}} + \\
        &\qquad \eta^2\EB[\norm{g_k - \nabla f(x^*)}]\\
    &\stackrel{\eqref{eq:proj_young}}{\leq}(1-\frac{\mu}{2}\eta)\norm{x_k - x^*} - 2\eta(f(x_k) - f(x^*) - \dotprod{\nabla f(x^*), x_k- x^*}) + \frac{L^2v^2\eta}{2\mu}\norm{\tau_{i, k}} + \eta^2\EB[\norm{g_k - \nabla f(x^*)}].
\end{align*}
Coupling with Lemma \ref{lemma:nearly_gk}, it yields the final iterative formula.
\end{proof}

\begin{proof}[\textbf{Proof of \cref{lemma:recursion_of_2}}]
    According to the update of $\td{h}$, we obtain 
    \begin{align*}
        &\quad\; \EB[\norm{\td{h}_{k+1} - \nabla f(x^*)}]\\
        &= p\EB[\norm{\td{h}_k + \frac{1}{d+2}(\hn f(x_k, u) - uu^\top \td{h}_k) -\nabla f(x^*)}] + (1-p) \norm{\td{h}_k - \nabla f(x^*)}\\
        &\stackrel{\eqref{eq:directional_derivative}}{=}p\underbrace{\EB\Big[\Big\Vert \td{h}_k + \frac{1}{d+2}(uu^\top \nabla f(x_k) +\frac{Lv}{2}s(x_k, u)\norm{u}u- uu^\top\td{h}_k) -\nabla f(x^*)\Big\Vert^2 \Big]}_{A_1} + (1-p) \norm{\td{h}_k - \nabla f(x^*)}.
    \end{align*}

    For $A_1$, we have
    \begin{align*}
            A_1
        &= \norm{\td{h}_k - \nabla f(x^*)} + \frac{2}{d+2}\EB\Big[\dotprod{\td{h}_k - \nabla f(x^*), uu^\top(\nabla f(x_k) - \td{h}_k)+ \frac{Lv}{2}s(x_k, u)\norm{u}u}\Big] + \\
            &\qquad \underbrace{\frac{1}{(d+2)^2}\EB\Big[\bnorm{uu^\top(\nabla f(x_k) - \td{h}_k )+ \frac{Lv}{2}s(x_k, u)\norm{u}u}]}_{A_2}\\
        &= \norm{\td{h}_k - \nabla f(x^*)} + \frac{2}{d+2}\dotprod{\td{h}_k - \nabla f(x^*), \nabla f(x_k) - \nabla f(x^*) + \nabla f(x^*) - \td{h}_k} + \\
            &\qquad \frac{Lv}{d+2}\EB\Big[\dotprod{\td{h}_k - \nabla f(x^*), s(x_k, u)\norm{u}u}\Big] + A_2\\
        &= (1-\frac{2}{d+2})\norm{\td{h}_k - \nabla f(x^*)} + \frac{2}{d+2}\dotprod{\td{h}_k - \nabla f(x^*), \nabla f(x_k) - \nabla f(x^*)} + \\
            &\qquad \frac{Lv}{d+2}\EB\Big[\dotprod{\td{h}_k - \nabla f(x^*), s(x_k, u)\norm{u}u}\Big] + A_2 \\
        &\stackrel{\eqref{eq:proj_young}}{\leq} (1-\frac{2}{d+2})\norm{\td{h}_k - \nabla f(x^*)} + \frac{2}{d+2}\dotprod{\td{h}_k - \nabla f(x^*), \nabla f(x_k) - \nabla f(x^*)} + \\
            &\qquad \frac{1}{6(d+2)}\norm{\td{h}_k -\nabla f(x^*)} + \frac{3L^2v^2}{2(d+2)}\EB[\Vert u\Vert^6] + A_2\\
        &= (1-\frac{11}{6(d+2)})\norm{\td{h}_k - \nabla f(x^*)} + \frac{2}{d+2}\dotprod{\td{h}_k - \nabla f(x^*), \nabla f(x_k) - \nabla f(x^*)} + \frac{3L^2v^2}{2(d+2)}\EB[\Vert u\Vert^6] + A_2.
    \end{align*}
    
    For $A_2$, we have
    \begin{align*}
        A_2 &= \frac{1}{(d+2)^2}\EB\Big[\Big\Vert uu^\top(\nabla f(x_k) - \td{h}_k) + \frac{Lv}{2}s(x_k, u)\norm{u}u\Big\Vert^2\Big]\\
        &\stackrel{\eqref{eq:proj_young}}{\leq} \frac{1}{(d+2)^2}\EB[\frac{7}{6}\norm{uu^\top (\nabla f(x_k) - \td{h}_k)} + \frac{7L^2v^2}{4}s^2(x_k, u)\Vert u\Vert ^6]\\
        &\stackrel{\eqref{eq:proj_gaussian_lemma2}}{\leq} \frac{7}{6(d+2)}\norm{\nabla f(x_k) - \td{h}_k} + \frac{7L^2v^2}{4(d+2)^2}\EB[\Vert u\Vert^6]\\
        &= \frac{7}{6(d+2)}\Big[\norm{\nabla f(x_k) - \nabla f(x^*)} + 2\dotprod{\nabla f(x^*) - \td{h}_k, \nabla f(x_k) - \nabla f(x^*)} + \norm{\td{h}_k - \nabla f(x^*)}\Big] + \\
            &\qquad\frac{7L^2v^2}{4(d+2)^2}\EB[\Vert u\Vert^6].
    \end{align*}
    
    Plugging $A_2$ into $A_1$, we obtain
    \begin{align*}
        A_1 &\leq (1-\frac{2}{3(d+2)})\norm{\td{h}_k - \nabla f(x^*)} + \frac{1}{3(d+2)}\dotprod{\nabla f(x^*) - \td{h}_k, \nabla f(x_k) - \nabla f(x^*)} +\\
            &\qquad \frac{7}{6(d+2)}\norm{\nabla f(x_k) - \nabla f(x^*)} + \frac{L^2v^2}{2(d+2)}\EB[\Vert u\Vert^6](3 + \frac{7}{2(d+2)})\\
            &\stackrel{\eqref{eq:proj_young}}{\leq }{1- \frac{1}{2(d+2)}}\norm{\td{h}_k - \nabla f(x^*)} + \frac{4}{3(d+2)}\norm{\nabla f(x_k) - \nabla f(x^*)} + \frac{L^2v^2}{2(d+2)}\EB[\Vert u\Vert^6](3+\frac{7}{2(d+2)})\\
            &\leftstackrel{\eqref{eq:proj_smooth}, \eqref{eq:bound_random_norm}}{\leq}(1- \frac{1}{2(d+2)})\norm{\td{h}_k - \nabla f(x^*)} + \frac{8L}{3(d+2)}(f(x_k) - f(x^*) - \dotprod{\nabla f(x^*), x_k - x^*}) + \frac{5(d+6)^3L^2v^2}{2(d+2)},
    \end{align*}
    where we also use the fact that $\frac{7}{2(d+2)}< 2$ for $d\geq 1$ in the last inequality.

    Hence, plugging $A_1$ into $\EB[\norm{\td{h}_{k+1} - \nabla f(x^*)}]$, we obtain that 
    \begin{align*}
        &\quad\; \EB[\norm{\td{h}_{k+1} - \nabla f(x^*)}] \\
        &\leq p\Big[(1- \frac{1}{2(d+2)})\norm{\td{h}_k - \nabla f(x^*)} + \frac{8L}{3(d+2)}(f(x_k) - f(x^*) - \dotprod{\nabla f(x^*), x_k - x^*}) + \frac{5(d+6)^3L^2v^2}{2(d+2)}\Big] + \\
            &\qquad (1-p)\norm{\td{h}_k - \nabla f(x^*)}\\
        &= (1-\frac{p}{2(d+2)})\norm{\td{h}_k - \nabla f(x^*)} + \frac{8Lp}{3(d+2)}(f(x_k) - f(x^*) - \dotprod{\nabla f(x^*), x_k - x^*}) + \frac{5(d+6)^3L^2v^2p}{2(d+2)},
    \end{align*}
    which concludes the result.
\end{proof}

\begin{proof}[\textbf{Proof of Lemma \ref{lemma:recursion_of_3}}]
    According to the update of $w$, we have the following
    \begin{align*}
        &\quad\; \EB[\frac{1}{n}\sum_{i=1}^n \norm{\nabla f_i(w_{k+1}) -\nabla f_i(x^*)}] \\
        &= \frac{1-p}{n}\sum_{i=1}^n \norm{\nabla f_i(w_k) - \nabla f_i(x^*)} + \frac{p}{n}\sum_{i=1}^n \norm{\nabla f_i(x_k) - \nabla f(x^*)}\\
        &\stackrel{\eqref{eq:proj_smooth}}{\leq}  \frac{1-p}{n}\sum_{i=1}^n \norm{\nabla f_i(w_k) - \nabla f_i(x^*)} + \frac{2Lp}{n}\sum_{i=1}^n(f_i(x_k) - f_i(x^*) - \dotprod{\nabla f_i(x^*), x_k - x^*})\\
        &=\frac{1-p}{n}\sum_{i=1}^n \norm{\nabla f_i(w_k) - \nabla f_i(x^*)} + 2Lp(f(x_k) - f(x^*) - \dotprod{\nabla f(x^*), x_k - x^*}).
    \end{align*}
\end{proof}

\begin{proof}[\textbf{Proof of Lema \ref{lemma:gf_relationship}}]
    Taking the expectation w.r.t. $u$, $u_k$, and $i$, we have
    \begin{align*}
        &\quad\; \EB[\norm{g_k - \nabla f(x^*)}]\\
        &\leftstackrel{\eqref{eq:tdfw_definition},\eqref{eq:g_definition}}{=} \EB\Big[\norm{\hn f_i(x_k, u_k) - \hn f_i(w_k, u_k) + \td{h}_k + \hn f(w_k, u) - uu^\top \td{h}_k - \nabla f(x^*)}\Big]\\
        &= \norm{\td{h}_k - \nabla f(x^*)} + 2 \EB\Big[\dotprod{\td{h}_k - \nabla f(x^*), \hn f_i(x_k, u_k) - \hn f_i(w_k, u_k) + \hn f(w_k, u) - uu^\top \td{h}_k}\Big] + \\
            &\qquad \underbrace{\EB\Big[\norm{\hn f_i(x_k, u_k) - \hn f_i(w_k, u_k) + \hn f(w_k, u) - uu^\top \td{h}_k}\Big]}_{B_1}\\
        &\stackrel{\eqref{eq:expectation_of_gk}}{=} \norm{\td{h}_k - \nabla f(x^*)} + 2 \dotprod{\td{h}_k - \nabla f(x^*), \nabla f(x_k)-\nabla f(x^*) + \nabla f(x^*) - \td{h}_k} + Lv\dotprod{\td{h}_k - \nabla f(x^*), \tau_{i,k}} + B_1\\
        &=-\norm{\td{h}_k - \nabla f(x^*)} + 2\dotprod{\td{h}_k - \nabla f(x^*), \nabla f(x_k)-\nabla f(x^*)} +  Lv\dotprod{\td{h}_k - \nabla f(x^*), \tau_{i,k}} + B_1.
    \end{align*}

    For $B_1$, with Eq.\eqref{eq:stochastic_directional_derivative} and Eq.\eqref{eq:directional_derivative}, we have
    \begin{align*}
        B_1 &= \EB[\norm{\hn f_i(x_k, u_k) - \hn f_i(w_k, u_k) + \hn f(w_k, u) - uu^\top \td{h}_k}]\\
        &\leftstackrel{\eqref{eq:stochastic_directional_derivative}, \eqref{eq:directional_derivative}}{=} \EB\Big[\bnorm{u_ku_k^\top \nabla f_i(x_k) + \frac{Lv}{2}s_i(x_k, u_k)\norm{u_k}u_k - u_ku_k^\top \nabla f_i(w_k) - \frac{Lv}{2}s_i(w_k, u_k)\norm{u_k}u_k + uu^\top \nabla f(w_k)+ \\
            &\qquad \frac{Lv}{2}s(w_k, u)\norm{u}u - uu^\top \td{h}_k}\Big]\\
        &\stackrel{\eqref{eq:proj_young}}{\leq} 2 \underbrace{\EB\Big[\bnorm{u_ku_k^\top \nabla f_i(x_k) - u_ku_k^\top \nabla f_i(w_k) +uu^\top \nabla f(w_k) - uu^\top \td{h}_k}\Big]}_{B_2} + \\
            &\qquad 2\underbrace{\EB\Big[\bnorm{\frac{Lv}{2}(s_i(x_k, u_k) - s_i(w_k, u_k))\norm{u_k}u_k + \frac{Lv}{2}s(w_k, u)\norm{u}u}\Big]}_{B_3}.
    \end{align*}

    For $B_2$, we have that 
    \begin{align*}
        B_2 &\stackrel{\eqref{eq:proj_gaussian_lemma2}}{=} (d+2)\EB[\norm{\nabla f_i(x_k) - \nabla f_i(x^*) + \nabla f_i(x^*) - \nabla f_i(w_k)}] + (d+2)\norm{\nabla f(w_k) - \td{h}_k} + \\
            &\qquad 2\dotprod{\nabla f(x_k) - \nabla f(w_k), \nabla f(w_k) - \td{h}_k}\\
        &\stackrel{\eqref{eq:proj_young}}{\leq} \frac{2(d+2)}{n}\sum_{i=1}^n \norm{\nabla f_i(x_k) - \nabla f_i(x^*)} + \frac{2(d+2)}{n}\sum_{i=1}^n \norm{\nabla f_i(w_k) - \nabla f_i(x^*)} + (d+2)\norm{\nabla f(w_k) - \td{h}_k} + \\
            &\qquad 2\dotprod{\nabla f(x_k) - \nabla f(w_k), \nabla f(w_k) - \td{h}_k}\\
        &=\frac{2(d+2)}{n}\sum_{i=1}^n \norm{\nabla f_i(x_k) - \nabla f_i(x^*)} + \frac{2(d+2)}{n}\sum_{i=1}^n \norm{\nabla f_i(w_k) - \nabla f_i(x^*)} + (d+2)\norm{\nabla f(w_k) - \nabla f(x^*)} + \\
            &\qquad (d+2)\norm{\nabla f(x^*) - \td{h}_k} + 2(d+2)\dotprod{\nabla f(w_k) - \nabla f(x^*), \nabla f(x^*) - \td{h}_k} + \\
            &\qquad 2\dotprod{\nabla f(x_k) - \nabla f(w_k), \nabla f(w_k) - \td{h}_k}.
    \end{align*}
    Since 
    \begin{align*}
        &\quad\; 2\dotprod{\nabla f(x_k) - \nabla f(w_k), \nabla f(w_k) - \td{h}_k} \\
        &=2\dotprod{\nabla f(x_k) - \nabla f(x^*), \nabla f(w_k) - \nabla f(x^*)} + 2\dotprod{\nabla f(x_k) - \nabla f(x^*), \nabla f(x^*) - \td{h}_k} - 2\norm{f(w_k) - f(x^*)} + \\
            &\qquad 2\dotprod{\nabla f(x^*) - \nabla f(w_k), \nabla f(x^*) - \td{h}_k},
    \end{align*}
    plugging it into $B_2$, we obtain that
    \begin{align*}
        B_2 &\leq \frac{2(d+2)}{n}\sum_{i=1}^n \norm{\nabla f_i(x_k) - \nabla f_i(x^*)} + \frac{2(d+2)}{n}\sum_{i=1}^n \norm{\nabla f_i(w_k) - \nabla f_i(x^*)} + d\norm{\nabla f(w_k) - \nabla f(x^*)} + \\
            &\qquad (d+2)\norm{\nabla f(x^*) - \td{h}_k} + 2(d+1)\dotprod{\nabla f(w_k) - \nabla f(x^*), \nabla f(x^*) - \td{h}_k} + \\
            &\qquad 2\dotprod{\nabla f(x_k) - \nabla f(x^*), \nabla f(w_k) - \nabla f(x^*)} + 2\dotprod{\nabla f(x_k) - \nabla f(x^*), \nabla f(x^*) - \td{h}_k}.
    \end{align*}
    With Lemma \ref{lemma:proj_sum_lemma} and \ref{lemma:proj_Young}, we have 
    \begin{align*}
        2(d+1)\dotprod{\nabla f(w_k) - \nabla f(x^*), \nabla f(x^*) - \td{h}_k} &\stackrel{\eqref{eq:proj_young}}{\leq} (d+1)\norm{\nabla f(w_k) - \nabla f(x^*)} + (d+1)\norm{\nabla f(x^*) - \td{h}_k},\\
        2\dotprod{\nabla f(x_k) - \nabla f(x^*), \nabla f(w_k) - \nabla f(x^*)}  &\stackrel{\eqref{eq:proj_young}}{\leq} 
        \norm{\nabla f(x_k) - \nabla f(x^*)} + \norm{\nabla f(w_k) - \nabla f(x^*)},\\
        \norm{\nabla f(w_k) - \nabla f(x^*)} &\stackrel{\eqref{eq:proj_sum_lemma}}{\leq} \frac{1}{n}\sum_{i=1}^n \norm{\nabla f_i(w_k) - \nabla f(x^*)}.
    \end{align*}

    Hence, we have 
    \begin{align*}
        B_2 &\leq \frac{2(d+2)}{n}\sum_{i=1}^n \norm{\nabla f_i(x_k) - \nabla f_i(x^*)} + \frac{2(2d+3)}{n}\sum_{i=1}^n \norm{\nabla f_i(w_k) -\nabla f_i(x^*)} + \\
                &\qquad (2d+3)\norm{\td{h}_k - \nabla f(x^*)} + \norm{\nabla f(x_k) - \nabla f(x^*)} + 2\dotprod{\nabla f(x_k) - \nabla f(x^*), \nabla f(x^*) - \td{h}_k}\\
            &\stackrel{\eqref{eq:proj_sum_lemma}}{\leq} \frac{2d + 5}{n}\sum_{i=1}^n \norm{\nabla f_i(x_k) - \nabla f_i(x^*)} + \frac{2(2d+3)}{n}\sum_{i=1}^n \norm{\nabla f_i(w_k) -\nabla f_i(x^*)} + \\
                &\qquad (2d+3)\norm{\td{h}_k - \nabla f(x^*)} + 2\dotprod{\nabla f(x_k) - \nabla f(x^*), \nabla f(x^*) - \td{h}_k}.
    \end{align*}

    For $B_3$, we have
    \begin{align*}
        B_3 &= \frac{L^2v^2}{4}\EB\Big[\bnorm{(s_i(x_k, u_k) - s_i(w_k, u_k))\norm{u_k}u_k + s(w_k, u)\norm{u}u}\Big]\\
        &\stackrel{\eqref{eq:proj_young}}{\leq} \frac{L^2v^2}{2} (\EB[\Vert u_k\Vert^6] + \EB[\Vert u\Vert^6])\\
        &\stackrel{\eqref{eq:bound_random_norm}}{\leq} L^2v^2(d+6)^3.
    \end{align*}

    Plugging $B_2$ and $B_3$ into $B_1$, we obtain 
    \begin{align*}
        B_1 &\leq \frac{2(2d + 5)}{n}\sum_{i=1}^n \norm{\nabla f_i(x_k) - \nabla f_i(x^*)} + \frac{4(2d+3)}{n}\sum_{i=1}^n \norm{\nabla f_i(w_k) -\nabla f_i(x^*)} + \\
                &\qquad 2(2d+3)\norm{\td{h}_k - \nabla f(x^*)} + 4\dotprod{\nabla f(x_k) - \nabla f(x^*), \nabla f(x^*) - \td{h}_k} + 2L^2v^2(d+6)^3.
    \end{align*}

    Plugging $B_1$ into the original inequality, we have
    \begin{align*}
        &\quad\; \EB[\norm{g_k - \nabla f(x^*)}]\\
        &\leq (4d+5)\norm{\td{h}_k - \nabla f(x^*)} + 2\dotprod{\nabla f(x_k) - \nabla f(x^*), \nabla f(x^*) - \td{h}_k} + Lv\dotprod{\td{h}_k - \nabla f(x^*), \tau_{i, k}} + \\
            &\qquad \frac{2(2d + 5)}{n}\sum_{i=1}^n \norm{\nabla f_i(x_k) - \nabla f_i(x^*)} + \frac{4(2d+3)}{n}\sum_{i=1}^n \norm{\nabla f_i(w_k) -\nabla f_i(x^*)} + 2L^2v^2(d+6)^3\\
        &\leftstackrel{\eqref{eq:proj_young}, \eqref{eq:proj_sum_lemma}}{\leq} 2(2d+3)\norm{\td{h}_k - \nabla f(x^*)} + \frac{L^2v^2}{2}\norm{\tau_{i, k}} + \frac{4(d+3)}{n}\sum_{i=1}^n \norm{\nabla f_i(x_k) - \nabla f_i(x^*)} + \\
            &\qquad \frac{4(2d+3)}{n}\sum_{i=1}^n \norm{\nabla f_i(w_k) -\nabla f_i(x^*)} + 2L^2v^2(d+6)^3\\
        &\leftstackrel{\eqref{eq:proj_smooth}}{\leq} 2(2d+3)\norm{\td{h}_k - \nabla f(x^*)} + \frac{L^2v^2}{2}\norm{\tau_{i, k}} + 4(d+3)L(f(x_k) - f(x^*) - \dotprod{\nabla f(x^*), x_k -x^*}) + \\
            &\qquad \frac{4(2d+3)}{n}\sum_{i=1}^n \norm{\nabla f_i(w_k) -\nabla f_i(x^*)} + 2L^2v^2(d+6)^3,
    \end{align*}
    where we can derive the result with the bound of $\EB[\norm{\tau_{i, k}}]$ in Lemma \ref{lemma:nearly_gk}.
\end{proof}

\begin{proof}[\textbf{Proof of \cref{coro:psi_recursion}}]
Let $C = f(x_k) - f(x^*) - \dotprod{\nabla f(x^*), x_k - x^*}$. With Lemma \ref{lemma:recursion_of_1}, \ref{lemma:recursion_of_2}, \ref{lemma:recursion_of_3}, and $p=\frac{1}{n}$, we have 
\begin{align*}
    &\quad\; \EB[\Psi(x_{k+1})]\\
    &\leftstackrel{\eqref{eq:recursion_of_1}, \eqref{eq:recursion_of_2}, \eqref{eq:recursion_of_3}}{\leq} (1-\frac{\mu}{2}\eta)\norm{x_k - x^*} - 2\eta C + \frac{2(d+3)^3L^2v^2\eta}{\mu} + \eta^2 \EB[\norm{g_k - \nabla f(x^*)}] +  \alpha \Big[\frac{5(d+6)^3L^2v^2}{2n(d+2)} + \\
        &\qquad(1-\frac{1}{2n(d+2)})\norm{\td{h}_k - \nabla f(x^*)} + \frac{8L}{3n(d+2)}C \Big] + \beta \Big[(1-\frac{1}{n})\frac{1}{n}\sum_{i=1}^n \norm{\nabla f_i(w_k) - \nabla f_i(x^*)} + \frac{2L}{n}C \Big].
\end{align*}   

Plugging Lemma \ref{lemma:gf_relationship} into the above inequality, we achieve 
\begin{align*}
    &\quad\; \EB[\psi(x_{k+1})] \\
    &\stackrel{\eqref{eq:gf_relationship}}{\leq} (1-\frac{\mu}{2}\eta)\norm{x_k - x^*} - 2\eta C + \frac{2(d+3)^3L^2v^2\eta}{\mu} + \eta^2\Big[ 2(2d+3)\norm{\td{h}_k - \nabla f(x^*)} + 4L^2v^2(d+6)^3 + \\
        &\qquad 4(d+3)LC + \frac{4(2d+3)}{n}\sum_{i=1}^n \norm{\nabla f_i(w_k) -\nabla f_i(x^*)}\Big] + \alpha \Big[(1-\frac{1}{2n(d+2)})\norm{\td{h}_k - \nabla f(x^*)} + \frac{8L}{3n(d+2)}C + \\
        &\qquad \frac{5(d+6)^3L^2v^2}{2n(d+2)}\Big] + \beta \Big[(1-\frac{1}{n})\frac{1}{n}\sum_{i=1}^n \norm{\nabla f_i(w_k) - \nabla f_i(x^*)} + \frac{2L}{n}C \Big].
\end{align*}
Rearranging the preceding inequality, we obtain
\begin{align*}
    &\quad\; \EB[\psi(x_{k+1})] \\
    &\leq (1-\frac{\mu}{2}\eta)\norm{x_k - x^*} + \alpha(1-\frac{1}{2n(d+2)} + \frac{2(2d+3)\eta^2}{\alpha})\norm{\td{h}_k - \nabla f(x^*)} + \\
        &\qquad \beta(1-\frac{1}{n} + \frac{4(2d+3)\eta^2}{\beta})\frac{1}{n}\sum_{i=1}^n \norm{\nabla f_i(w_k) - \nabla f_i(x^*)} - (2\eta - 4(d+3)L\eta^2 - \frac{8L}{3n(d+2)}\alpha - \frac{2L}{n}\beta)C + \\
        &\qquad  \frac{2(d+3)^3L^2v^2\eta}{\mu} + 4L^2v^2(d+6)^3\eta^2 + \frac{5(d+6)^3L^2v^2}{2n(d+2)}\alpha,
\end{align*}
which concludes the result.
\end{proof}

\begin{proof}[\textbf{Proof of \cref{thm:psi_convergence}}]
Plugging the values of $\eta$, $\alpha$, $\beta$ in to \cref{coro:psi_recursion}, we obtain that 
\begin{align*}
    &\quad\; \EB[\Psi(x_{k+1})] \\
    &\leq (1-\frac{1}{(80d+126)\kappa})\norm{x_{k} - x^*} + (1-\frac{1}{4n(d+2)})\alpha\norm{\td{h}_k - \nabla f(x^*)} + \\
        &\qquad \beta(1-\frac{1}{2n})\frac{1}{n}\sum_{i=1}^n \norm{\nabla f_i(w_k) - \nabla f_i(x^*)} -\lambda C + \frac{2(d+3)^3v^2\kappa}{40d + 63} + \frac{8(5d+8)(d+6)^3v^2}{(40d+63)^2},
\end{align*}
where $\lambda = \frac{2}{(40d+63)L} - \frac{\frac{188}{3}d + 100}{(40d+63)^2L}$, i.e., $\lambda > 0$. Then, we can eliminate the term $C$ in the above inequality and obtain,
\begin{align*}
    &\quad\; \EB[\Psi(x_{k+1})] \\ 
    &\leq (1-\frac{1}{(80d+126)\kappa})\norm{x_{k} - x^*} + (1-\frac{1}{4n(d+2)})\alpha\norm{\td{h}_k - \nabla f(x^*)} + \\
        &\qquad \beta(1-\frac{1}{2n})\frac{1}{n}\sum_{i=1}^n \norm{\nabla f_i(w_k) - \nabla f_i(x^*)} + \delta, \\
    &\leq \max\{1-\frac{1}{\kappa(80d+126)}, 1-\frac{1}{4n(d+2)}\} \Psi(x_k) + \delta,
\end{align*}
where $\delta = \frac{2(d+3)^3v^2\kappa}{40d + 63} + \frac{8(5d+8)(d+6)^3v^2}{(40d+63)^2}$.
\end{proof}

\begin{proof}[\textbf{Proof of \cref{coro:convergence_property}}]
Since $\theta = \frac{1}{\kappa(80d+126) + 4n(d+2)} \leq \min \{\frac{1}{\kappa(80d+126)}, \frac{1}{4n(d+2)}\}$, based on \cref{thm:psi_convergence}, we have that 
\begin{align*}
    \EB[\Psi(x_{k+1})] \leq (1-\theta) \Psi(x_k) + \delta.
\end{align*}

Telescoping it from $k=0$ for $K-1$, we obtain that 
\begin{align*}
    \EB[\Psi(x_K)] &\leq (1-\theta)^K\Psi(x_0) + \sum_{k=0}^{k-1} (1-\theta)^k \delta\\
    &= (1-\theta)^K\Psi(x_0) + \frac{1-(1-\theta)^K}{\theta}\delta\\
    &\leq (1-\theta)^K\Psi(x_0) + \frac{\delta}{\theta}.
\end{align*}
Furthermore, since $\delta \leq (\kappa + 1)d^2v^2$, we conclude the result.
\end{proof}

\section{Hyperparameter Tuning}
\label{sec:hyperparameter}
We employ the grid search method to determine the optimal hyperparameters for \texttt{ZPDVR} and other baseline methods. 
The total number of SZO function evaluations, and the batch size of samples and directions are fixed and equal for all methods. 
Apart from the learning rate $\eta$, the probability $\rho$ in \texttt{ZPDVR} and the number of the inner loop $m$ in \texttt{ZPSVRG} are also tuned parameters. 
The hyperparameter search fields for each method in the four data sets are as follows:
\begin{itemize}
    \item a9a. The learning rate $\eta$ is ranged from $\{1\times 10^{-2}, 5\times 10^{-2}, 1\times 10^{-1}, 5\times 10^{-1}, 1, 2, 5, 10\}$ for \texttt{PGD}, $\{1\times 10^{-1}, 5\times 10^{-1}, 1, 5, 10\}$ for \texttt{ZPDVR}, $\{1\times 10^{-3}, 5\times 10^{-3}, 1\times 10^{-2}, 5\times 10^{-2}, 1\times 10^{-1}, 5\times 10^{-1}, 1\}$ for \texttt{ZPSVRG}, $\{1\times 10^{-3}, 5\times 10^{-3}, 1\times 10^{-2}, 5\times 10^{-2}, 1\times 10^{-1}, 5\times 10^{-1}, 1, 5, 10\}$ for \texttt{SEGA}.
    The probability $\rho$ is ranged from $\{1\times 10^{-2}, 2\times 10^{-2}, 3\times 10^{-2}, 4\times 10^{-2}, 5\times 10^{-2}\}$ and the number of inner loop $m$ is selected from $\{10, 50, 100, 500, 1000, 5000\}$;
    \item w8a. The learning rate $\eta$ is ranged from $\{1\times 10^{-1}, 5\times 10^{-1}, 1, 2, 5, 10\}$ for \texttt{PGD}, $\{1\times 10^{-3}, 7\times 10^{-3}, 1\times 10^{-2}, 7\times 10^{-2}, 1\times 10^{-1}\}$ for \texttt{ZPDVR}, $\{1\times 10^{-3}, 5\times 10^{-3}, 7\times 10^{-3}, 1\times 10^{-2}, 5\times 10^{-2}, 7\times 10^{-2}, 1\times 10^{-1}\}$ for \texttt{ZPSVRG}, $\{1\times 10^{-3}, 5\times 10^{-3}, 1\times 10^{-2}, 5\times 10^{-2}, 1\times 10^{-1}, 5\times 10^{-1}, 1, 5, 10\}$ for \texttt{SEGA}.
    The probability $\rho$ is ranged from $\{5\times 10^{-4}, 1\times 10^{-3}, 7\times 10^{-3}\}$ and the number of inner loop $m$ is selected from $\{10, 50, 100, 500, 1000\}$;
    \item covtype. The Learning rate $\eta$ is ranged from $\{1\times 10^{-2}, 5\times 10^{-2}, 1\times 10^{-1}, 5\times 10^{-1}, 1, 2, 3, 4, 5, 10\}$ for \texttt{PGD}, $\{1\times 10^{-3}, 5\times 10^{-3}, 1\times 10^{-2}, 5\times 10^{-2}, 1\times 10^{-1}, 5\times 10^{-1}, 1\}$ for \texttt{ZPDVR}, $\{1\times 10^{-3}, 5\times 10^{-3}, 1\times 10^{-2}, 2\times 10^{-2}, 5\times 10^{-2}, 1\times 10^{-1}\}$ for \texttt{ZPSVRG},
    $\{1\times 10^{-3}, 5\times 10^{-3}, 1\times 10^{-2}, 5\times 10^{-2}, 1\times 10^{-1}, 5\times 10^{-1}, 1, 5, 10\}$ for \texttt{SEGA}.
    The number of inner loop $m$ is selected from $\{10, 50, 100, 500, 1000\}$.
    In this dataset, we periodically update $w$ and $\td{h}$ for \texttt{ZPDVR} (update these parameters after completing a full pass over the data set). This procedure omits the randomness caused by $\rho$ and is equivalent to set $\rho=\frac{B}{n}$, where $B$ is the batch size of samples;
    \item gisette. The learning rate $\eta$ is range from $\{1\times 10^{-4}, 5\times 10^{-4}, 1\times 10^{-3}, 5\times 10^{-3}, 1\times 10^{-2}, 5\times 10^{-2}, 1\times 10^{-1}\}$ for \texttt{PGD}, $\{1\times 10^{-3}, 5\times 10^{-3}, 1\times 10^{-2}, 5\times 10^{-2}, 1\times 10^{-1}\}$ for \texttt{ZPDVR}, $\{1\times 10^{-4}, 5\times 10^{-4}, 1\times 10^{-3}, 5\times 10^{-3}, 1\times 10^{-2}, 5\times 10^{-2}, 1\times 10^{-1}\}$ for \texttt{ZSVRG}, $\{1\times 10^{-3}, 5\times 10^{-3}, 1\times 10^{-2}, 5\times 10^{-2}, 1\times 10^{-1}, 5\times 10^{-1}, 1, 5, 10\}$ for \texttt{SEGA}.
    The number of inner loop $m$ is selected from $\{10, 20, 50, 100\}$.
    Similarly, we also periodically update $w$ and $\td{h}$ for \texttt{ZPDVR}.
\end{itemize}


\end{document}